\newcommand*{\citet}[1]{\AtNextCite{\AtEachCitekey{\defcounter{maxnames}{2}}} \textcite{#1}}
\newcommand*{\citep}[1]{\cite{#1}}
\let\citealp\citep
	\let\Cref\crtCref
	\let\cref\crtcref
\newcolumntype{C}{>{\centering\arraybackslash}X}
\newcolumntype{P}[1]{>{\arraybackslash}p{#1}}
\newcolumntype{x}[1]{%
	>{\raggedleft\hspace{0pt}}p{#1}}%
\newif\ifcomments
\newcommand{\ha}[1]{
		\textcolor{blue}{\textbf{HA:} {#1}}
}
\newcommand{\tk}[1]{
		\textcolor{magenta}{\textbf{TK:} {#1}}
}
\newcommand{\kt}[1]{
		\textcolor{red}{\textbf{KT:} {#1}}
}
\newcommand{\vf}[1]{
		\textcolor{green}{\textbf{VF:} {#1}}
}
\newcommand{\ha}[1]{}
\newcommand{\tk}[1]{}
\newcommand{\kt}[1]{}
\newcommand{\vf}[1]{}
\newcommand{\ifrac}[2]{{#1}/{#2}}
\newcommand{\ubar}[1]{\underaccent{\bar}{#1}}\usepackage{accents}
\title{Near-Optimal Algorithms for Private Online Optimization \\ in the Realizable Regime}
\author{%
    Hilal Asi\thanks{Apple; \texttt{hilal.asi94@gmail.com} } 
    \and Vitaly Feldman\thanks{Apple; \texttt{vitaly.edu@gmail.com}.}
    \and Tomer Koren\thanks{Blavatnik School of Computer Science, Tel Aviv University; \texttt{tkoren@tauex.tau.ac.il}.}
    \and Kunal Talwar\thanks{Apple; \texttt{kunal@kunaltalwar.org}.}
    }
\begin{document}

\maketitle

\begin{abstract}
    We consider online learning problems in the realizable setting, where there is a zero-loss solution, and propose new Differentially Private (DP) algorithms that obtain near-optimal regret bounds.
    For the problem of online prediction from experts, we design new algorithms 
    that obtain near-optimal regret $\wt O \big( \diffp^{-1} \log^{1.5}{d} \big)$ where $d$ is the number of experts. 
    This significantly improves over the best existing regret bounds for the DP non-realizable setting which are $\wt O \big( \diffp^{-1} \min\big\{d, T^{1/3}\log d\big\} \big)$. 
    We also develop an adaptive algorithm for the small-loss setting with regret $O(L\opt \log d + \diffp^{-1} \log^{1.5}{d})$ where $L\opt$ is the total loss of the best expert.
    Additionally, we consider DP online convex optimization in the realizable setting and propose an algorithm with near-optimal regret $\wt O \big(\diffp^{-1} d^{1.5} \big)$,
    as well as an algorithm for the smooth case with regret $\wt O \big( \diffp^{-2/3} (dT)^{1/3} \big)$, both significantly improving over existing bounds in the non-realizable regime.
\end{abstract}

\section{Introduction}

We study the problem of private online optimization in the realizable setting where there is a zero-loss solution. In this problem, an online algorithm $\A$ interacts with an adversary over $T$ rounds. The adversary picks a (non-negative) loss function $\ell_t: \mc{X} \to \R$ at round $t$ and simultaneously the algorithm $\A$ picks a response $x_t$, suffering loss $\ell_t(x_t)$. The algorithm aims to minimize the regret, which is the loss compared to the best solution $x\opt \in \mc{X}$ in hindsight, while at the same time keeping the sequence of predictions $x_1,\ldots,x_T$ differentially private with respect to individual loss functions. 

In this paper, we focus on two well-studied instances of this problem. In differentially private online prediction from experts (DP-OPE), we have $d$ experts $\mc{X} = [d]$ and the adversary chooses a loss function $\ell_t: [d] \to [0,1]$. Our second setting is differentially private online convex optimization (DP-OCO) where $\mc{X} \subset \R^d$ is a convex set with bounded diameter, and the adversary chooses convex and $L$-Lipschitz loss functions $\ell_t : \mc{X} \to \R^+$.

Several papers have recently studied DP-OPE and DP-OCO in the general non-realizable setting~\cite{JainKoTh12,SmithTh13,JainTh14,AgarwalSi17,KairouzMcSoShThXu21}. These papers have resulted in different algorithms with sub-linear regret for both problems. For DP-OPE, ~\citet{AgarwalSi17,JainTh14} developed private versions of follow-the-regularized-leader (FTRL) obtaining (normalized) regret $\min\big\{ {d}/{T\diffp}, {\sqrt{T \log d}}/{T\diffp} \big\}$. 
More recently, \citet{AsiFeKoTa22} developed low-switching algorithms for DP-OPE with oblivious adversaries, obtaining normalized regret roughly $O(\sqrt{\log (d)/T} +  \log d/T^{2/3} \eps)$.
Additionally, for the problem of DP-OCO, \citet{KairouzMcSoShThXu21} have recently proposed a DP-FTRL algorithm based on the binary tree mechanism which obtains (normalized) regret $ \big({\sqrt{d}}/{T\diffp} \big)^{1/2}$.

Despite this progress, the regret bounds of existing algorithms are still polynomially worse than existing lower bounds. Currently, the only existing lower bounds for oblivious adversaries are the trivial bounds from the non-online versions of the same problems: for DP-OPE, lower bounds for private selection~\cite{SteinkeUll17b} imply a (normalized) regret lower bound of $O({\log(d)}/{T\diffp)}$, while existing lower bounds for DP-SCO~\cite{FeldmanKoTa20} give a (normalized) regret lower bound of $\Omega({\sqrt{d}}/{T\diffp})$ for DP-OCO.

Practical optimization problems arising from over-parameterized models often lead to instances that additionally satisfy {\em realizability}, i.e. that the optimal loss is zero or close to zero. This motivates the study of designing algorithms that can do better under this assumption.  Realizability has been studied since the early days of learning theory and ubiquitous in the non-private online optimization literature~\cite{SrebroSrTe10,Shalev12,Hazan16}.
It has proven useful for improving regret bounds in non-private OPE and OCO~\cite{Shalev12,SrebroSrTe10} and in the closely related problem of differentially private stochastic convex optimization (DP-SCO)~\cite{AsiChChDu22}.  In this work we study DP-OPE and DP-OCO in the realizable setting and develop new algorithms that obtain near-optimal regret bounds in several settings.





\subsection{Contributions}
We propose new algorithms and lower bounds 
for the problems of differentially private online prediction from experts (DP-OPE) and differentially private online convex optimization (DP-OCO) in the realizable setting. The following are our primary contributions:

\begin{itemize} 
\item \textbf{Near-optimal algorithms for DP-OPE.}~~
We design new algorithms that obtain near-optimal regret $\wt O \left( \log^{1.5}(d)/\diffp \right)$ for DP-OPE with $d$ experts when there is a zero-loss expert. 
The best existing algorithms for non-realizable DP-OPE obtain significantly worse regret bounds $\min\big\{{d}/{\diffp}, T^{1/3} \log d/{\diffp} \big\}$~\cite{AgarwalSi17,AsiFeKoTa22}, which have a polynomial dependence on either $T$ or the number of experts $d$.
Our algorithms build on sequential applications of the exponential mechanism to pick a good expert, and the sparse-vector-technique to identify when the current expert is no longer a good expert (with near-zero loss).  
Crucially, an oblivious adversary cannot identify which expert the algorithm has picked, resulting in a small number of switches.
We deploy a potential-based proof strategy to show that this algorithm have logarithmic number of switches.
We also show that a lower bound of $\Omega(\log d / \diffp)$ holds for any $\diffp$-DP algorithm even in the realizable case.
\item \textbf{Adaptive algorithms for DP-OPE with low-loss experts.}~~
We also develop an algorithm that adapts to the setting where there is an expert with low loss, that is, $L\opt = \min_{x \in [d]} \sum_{t=1}^T \ell_t(x)$. Our algorithms are adaptive to the value of $L\opt$ and obtain total regret of $L\opt \log d + \diffp^{-1} \log^{1.5} d$.
\item \textbf{Near-optimal regret for low-dimensional DP-OCO.}~~
Building on our algorithms for DP-OPE, we propose a new algorithm for DP-OCO that obtains regret $\wt O \left( d^{1.5}/\diffp \right)$. This is near-optimal for low-dimensional problems where $d=O(1)$ and improves over the best existing algorithm which obtains a normalized regret $ (\sqrt{d}/T\diffp)^{1/2}$~\cite{KairouzMcSoShThXu21}.
\item \textbf{Improved regret for smooth DP-OCO.}~~
When the loss function is smooth, we show that DP-FTRL~\cite{KairouzMcSoShThXu21} with certain parameters obtains an improved normalized regret of 
$(\sqrt{d}/T\diffp)^{2/3}$ if there is a zero-loss expert.
\end{itemize}

\begin{table*}[t]
\begin{center}
		\begin{tabular}{| Sc | Sc | Sc |}
		    \hline
			  & \textbf{\darkblue{Non-realizable}} & \makecell{\textbf{\darkblue{Realizable}}\\\textbf{(This work)}}\\
			\hline
			{{\textbf{DP-OPE}}} & $\displaystyle \min\left\{ \frac{\sqrt{d}}{T\diffp}, \sqrt{\frac{ \log d}{T}} + \frac{\log d}{T^{2/3}\eps} \right\}$~\footnotesize{\cite{AgarwalSi17,AsiFeKoTa22}}  & $\displaystyle \frac{\log^{1.5} d}{T\diffp}$ \\
			\cline{1-3} 
			\textbf{{DP-OCO}} & $\displaystyle  \left(\frac{\sqrt{d}}{T\diffp} \right)^{1/2}$~\footnotesize{\cite{KairouzMcSoShThXu21} } & $\displaystyle \frac{d^{1.5}}{T\diffp}$ \\ 
			\cline{1-3}
			\textbf{{DP-OCO (smooth)}} & $\displaystyle  \left(\frac{\sqrt{d}}{T\diffp} \right)^{1/2}$~\footnotesize{\cite{KairouzMcSoShThXu21}} & $\displaystyle \left(\frac{\sqrt{d}}{T\diffp} \right)^{2/3}$ \\
			\hline
		\end{tabular}
     \end{center}
          \caption{Comparison between (normalized) regret upper bounds for the realizable and non-realizable case for both DP-OPE and DP-OCO. For readability, we omit logarithmic factors in $T$ and $1/\delta$.}
     \label{tab:temps}
\end{table*}

\subsection{Related work}
Several works have studied online optimization in the realizable setting, developing algorithms with better regret bounds~\cite{Shalev12,SrebroSrTe10}. For online prediction from experts, the weighted majority algorithm obtains a regret bound of $4\log{d}$ compared to $O(\sqrt{T\log d})$ in the non-realizable setting. Moreover, for online convex optimization, \citet{SrebroSrTe10} show that online mirror descent achieves regret $4\beta D^2 + 2\sqrt{\beta D^2 T L\opt}$ compared to $O(\sqrt{T})$ in the general case.


On the other hand, the private online optimization literature has mainly studied the general non-realizable case~\cite{JainKoTh12,SmithTh13,JainTh14,AgarwalSi17,KairouzMcSoShThXu21}.
For online prediction from experts, the best existing regret bounds for \ed-DP are $O(\diffp^{-1}\sqrt{T \log d \log(1/\delta)})$~\cite{JainTh14}  and $O(\sqrt{T\log d} + \diffp^{-1} \sqrt{d \log(1/\delta)} \log d \log^2 T)$~\cite{AgarwalSi17}. 
\citet{AsiFeKoTa22} show that these rates can be improved using a private version of the shrinking dartboard algorithm, obtaining regret roughly $O(\sqrt{T \log d} + T^{1/3} \log d/\eps)$.
For online convex optimization, \citet{KairouzMcSoShThXu21} developed a private follow-the-regularized-leader algorithm using the binary tree mechanism that obtains normalized regret bound $\wt O\big( {\sqrt{d}}/{T\diffp} \big)^{1/2}$. 

The realizable setting has recently been studied in the different but related problem of differentially private stochastic convex optimization (DP-SCO)~\cite{AsiChChDu22}.  
DP-SCO and DP-OCO are closely related as one can convert an OCO algorithm into an SCO algorithm using standard online-to-batch transformations~\cite{Hazan16} 
\citet{AsiChChDu22} study DP-SCO problems in the interpolation regime where there exists a minimizer that minimizes all loss functions, and propose algorithms that improve the regret over the general setting if the functions satisfy certain growth conditions.





\section{Preliminaries}
In online optimization, we have an interactive $T$-round game between an adversary and an online algorithm. In this paper, we focus on oblivious adversaries that choose in advance a sequence of loss functions $\ell_1,\dots,\ell_T$ where $\ell_t : \mc{X} \to \R$. Then, at round $t$, the adversary releases a loss function $\ell_t$ and simultaneously the algorithm plays a solution $x_t \in \mc{X}$. The algorithm then suffers loss $\ell_t(x_t)$ at this round. The regret of the online algorithm is
\begin{equation*}
    \reg_T(\A) = \sum_{t=1}^T \ell_t(x_t) - \min_{x^\star \in \mc{X}} \sum_{t=1}^T \ell_t(x^\star).
\end{equation*}
For ease of notation, for an oblivious adversary that chooses a loss sequence $\Ds = (\ell_1,\dots,\ell_T) $, we let $\A(\Ds) = (x_1,\dots,x_T)$ denote the output of the interaction between the online algorithm and the adversary.

In this work, we are mainly interested in two instances of the above general online optimization problem:

\begin{itemize} 
\item \textbf{Online prediction from experts (OPE).}~~
In this problem, we have a set of $d$ experts $\mc{X} = [d]$, and the adversary chooses loss functions $\ell_t : [d] \to [0,1]$.
\item\textbf{Online convex optimization (OCO).}~~
In OCO, we are optimizing over a convex set $\mc{X} \subseteq \R^d$ with bounded diameter $\diam(\mc{X}) \le D$,%
\footnote{The diameter of a set $\mc{X} \subseteq \R^d$ (in Euclidean geometry) is defined as $\diam(\mc{X}) = \sup_{x,y \in \mc{X}} \|x-y\|$.}
and the adversary chooses loss functions $\ell_t : \mc{X} \to \R$ that are convex and $L$-Lipschitz. 
\end{itemize}

We are mainly interested in the so-called realizable setting. More precisely, we say than an OPE (or OCO) problem is \emph{realizable} if there exists a feasible solution $x\opt \in \mc{X}$ such that $L\opt = \sum_{t=1}^T \ell_t(x\opt) = 0$. We also extend some of our results to the near-realizable setting where $0 < L\opt \ll T$.

The main goal of this paper is to study both of these problems under the restriction of differential privacy.

\begin{definition}[Differential Privacy]
\label{def:DP}
	A randomized  algorithm $\A$ is \emph{\ed-differentially private} against oblivious adversaries (\ed-DP) if, for all sequences $\Ds=(\ell_1,\dots,\ell_T)$ and $\Ds'=(\ell'_1,\dots,\ell'_T)$ that differ in a single element, and for all events $\cO$ in the output space of $\A$, we have
	\[
	\Pr[\A(\Ds)\in \cO] \leq e^{\eps} \Pr[\A(\Ds')\in \cO] +\delta.
	\]
\end{definition}
We note that our algorithms satisfy a stronger privacy guarantee against adaptive adversaries (see for example the privacy definition in~\cite{JainRaSiSm21}). However, we choose to focus solely on oblivious adversaries for ease of presentation and readability.

\subsection{Background on Differential Privacy}
\newcommand{\AbThr}{\ensuremath{\mathsf{AboveThreshold}}}
\newcommand{\init}{\ensuremath{\mathsf{InitializeSparseVec}}} 
\newcommand{\addq}{\ensuremath{\mathsf{AddQuery}}} 
\newcommand{\test}{\ensuremath{\mathsf{TestAboThr}}} 

In our analysis, we require the following standard privacy composition result.
\begin{lemma}[Advanced composition~\citealp{DworkRo14}] 
\label{lemma:advanced-comp}
    If $\A_1,\dots,A_k$ are randomized algorithms that each is $(\diffp,\delta)$-DP, then their composition $(\A_1(\Ds),\dots,A_k(\Ds))$ is $(\sqrt{2k \log(1/\delta')} \diffp + k \diffp (e^\diffp - 1),\delta' + k \delta)$-DP.
\end{lemma}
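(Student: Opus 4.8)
The plan is to follow the standard privacy-loss-random-variable argument behind advanced composition: bound the expected per-round privacy loss, use a martingale concentration inequality to control the cumulative loss, and then translate a high-probability bound on the privacy loss back into an $\ed$-DP guarantee. Along the way one first reduces the general $\ed$-DP case to the pure ($\delta=0$) case, which is where the extra $k\delta$ in the conclusion comes from.

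First I would fix a neighboring pair $\Ds,\Ds'$ and consider the joint transcript $o=(o_1,\dots,o_k)$ of the composition $(\A_1(\Ds),\dots,\A_k(\Ds))$. For each round $i$ define the conditional privacy loss $L_i = \ln \frac{\Pr[\A_i(\Ds)=o_i \mid o_{<i}]}{\Pr[\A_i(\Ds')=o_i \mid o_{<i}]}$, so that the privacy loss of the composed mechanism on this pair is $L=\sum_{i=1}^k L_i$. Reducing to the pure case (deferred to the last paragraph), each $L_i$ lies in $[-\diffp,\diffp]$; moreover its conditional expectation under $o\sim\A(\Ds)$ is the KL divergence between the conditional output laws of $\A_i$ on $\Ds$ and on $\Ds'$, and a short calculation shows that any $P,Q$ with $|\ln(P/Q)|\le\diffp$ pointwise satisfy $\mathrm{KL}(P\|Q)\le\diffp(e^\diffp-1)$. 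Hence $\mathbf{E}[L_i\mid o_{<i}]\le\diffp(e^\diffp-1)$.

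Next I would apply concentration. The centered increments $Z_i=L_i-\mathbf{E}[L_i\mid o_{<i}]$ form a martingale difference sequence whose terms lie in an interval of length $2\diffp$, so the Azuma--Hoeffding inequality gives $\Pr\big[\sum_i Z_i > \sqrt{2k\log(1/\delta')}\,\diffp\big]\le\delta'$; adding back the mean bound shows that, outside an event of probability at most $\delta'$, the privacy loss satisfies $L\le\diffp'$ with $\diffp':=\sqrt{2k\log(1/\delta')}\,\diffp+k\diffp(e^\diffp-1)$ (the same bound holds with $\Ds$ and $\Ds'$ interchanged). Finally I would invoke the standard equivalence: if for every neighboring pair the privacy loss exceeds $\diffp'$ with probability at most $\delta'$, then the mechanism is $(\diffp',\delta')$-DP --- given any output event $\cO$, split it according to whether $L\le\diffp'$, bounding the first part by an $e^{\diffp'}$ factor on $\Pr[\A(\Ds')\in\cO]$ and the second part by $\delta'$.

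The step I expect to need the most care is the reduction from $\ed$-DP components to the pure case, since the bounded-privacy-loss property holds only outside a $\delta$-probability bad event for each $\A_i$. The standard device is a coupling (or ``deletion'') argument expressing each $\A_i$, on either input, as a mixture of a mechanism whose privacy loss is pointwise bounded by $\diffp$ and a residual of mass at most $\delta$; running the martingale argument on the good parts and union-bounding over the $k$ residual events yields exactly the $\delta'+k\delta$ failure probability in the statement. Making the bookkeeping of these bad events consistent with the prefix-conditioning in the definition of $L_i$ across all $k$ rounds is the one place where the argument is more than a routine calculation.
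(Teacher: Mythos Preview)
The paper does not supply its own proof of this lemma; it is quoted as a background result from the Dwork--Roth monograph. Your proposal is precisely the standard argument given there (privacy-loss random variable, the KL bound $\diffp(e^\diffp-1)$, Azuma--Hoeffding on the centered increments, the conversion from a high-probability privacy-loss bound to $\ed$-DP, and the coupling reduction from $\ed$ components to pure ones), and it is correct.
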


In addition to basic facts about differential privacy such as composition and post-processing, our development uses two key techniques from the privacy literature: the Sparse-vector-technique and the binary tree mechanism, which we now describe.

\paragraph{Sparse vector technique.}

We recall the sparse-vector-technique~\cite{DworkRo14} which we use for the realizable setting in~\cref{sec:upper-bounds-realizable}. Given an input $\Ds = (z_1,\dots,z_n) \in \domain^n$, the algorithm takes a stream of queries $q_1,q_2,\dots,q_T$ in an online manner. We assume that each $q_i$ is $1$-sensitive, that is, $|q_i(\Ds) - q_i(\Ds') | \le 1$ for neighboring datasets $\Ds,\Ds' \in \domain^n$ that differ in a single element.
We have the following guarantee. 
\begin{lemma}[\citealp{DworkRo14}, Theorem 3.24]
\label{lemma:svt}
    Let $\Ds = (z_1,\dots,z_n) \in \domain^n$.
    For a threshold $L$ and $\beta>0$, there is an $\diffp$-DP algorithm (\AbThr) that halts at time $k \in [T+1]$ such that for $\alpha = \frac{8(\log T + \log(2/\beta))}{\diffp}$ with probability at least $1-\beta$,
    \begin{itemize} 
        \item For all $t < k$, $q_i(\Ds) \le L + \alpha$;
        \item $q_k(\Ds) \ge L - \alpha$ or $k = T+1$.
    \end{itemize}
\end{lemma}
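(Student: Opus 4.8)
The plan is to instantiate the classical \AbThr{} algorithm and verify the two claims separately. The algorithm draws a single noisy threshold $\hat L = L + \mathrm{Lap}(2/\diffp)$ at the outset; then, scanning the queries in order, at step $t$ it draws fresh noise $\nu_t \sim \mathrm{Lap}(4/\diffp)$, halts and returns $k=t$ as soon as $q_t(\Ds) + \nu_t \ge \hat L$, and otherwise proceeds; if no query ever crosses the noisy threshold it returns $k = T+1$. Only the index $k$ is released, which is the feature that makes the privacy analysis possible despite the (possibly unbounded) number of queries.

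For privacy, fix neighboring $\Ds,\Ds'$ and a target index $k \le T$ (the case $k = T+1$ is analogous and easier). Conditioning on the realizations of $\nu_1,\dots,\nu_{k-1}$, set $g(\Ds) = \max_{t<k}\big(q_t(\Ds)+\nu_t-L\big)$; this is $1$-sensitive because each $q_t$ is $1$-sensitive and a maximum of $1$-sensitive functions is $1$-sensitive. With $v := \hat L - L \sim \mathrm{Lap}(2/\diffp)$, the event $\{\text{output}=k\}$ is exactly $\{v > g(\Ds)\}\cap\{\nu_k \ge L + v - q_k(\Ds)\}$. Now apply the simultaneous change of variables $v \mapsto v + \big(g(\Ds')-g(\Ds)\big)$ and $\nu_k \mapsto \nu_k + \big(g(\Ds')-g(\Ds)\big) + \big(q_k(\Ds)-q_k(\Ds')\big)$, which carries the event for $\Ds$ exactly onto the event for $\Ds'$. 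The shift of $v$ has magnitude $\le 1$ and distorts the $\mathrm{Lap}(2/\diffp)$ density by at most $e^{\diffp/2}$; the shift of $\nu_k$ has magnitude $\le 2$ and distorts the $\mathrm{Lap}(4/\diffp)$ density by at most $e^{\diffp/2}$. Integrating back over $\nu_1,\dots,\nu_{k-1}$ yields $\Pr[\AbThr(\Ds)=k] \le e^{\diffp}\Pr[\AbThr(\Ds')=k]$, hence $\diffp$-DP.

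For accuracy, set $\alpha = \tfrac{8(\log T + \log(2/\beta))}{\diffp}$ and let $E$ be the event that $|v| \le \alpha/2$ and $|\nu_t| \le \alpha/2$ for all $t \le T$. Since $\Pr[|\mathrm{Lap}(b)|>s] = e^{-s/b}$, we get $\Pr[|v|>\alpha/2] \le \beta/2$ and, by a union bound over the $T$ query noises, $\Pr[\exists t:\ |\nu_t|>\alpha/2] \le T e^{-\alpha\diffp/8} \le \beta/2$, so $\Pr[E]\ge 1-\beta$. On $E$: for $t<k$ the query did not fire, so $q_t(\Ds)+\nu_t < \hat L = L+v$, giving $q_t(\Ds) < L + v - \nu_t \le L+\alpha$; and if $k \le T$ the query fired, so $q_k(\Ds)+\nu_k \ge \hat L = L+v$, giving $q_k(\Ds) \ge L + v - \nu_k \ge L-\alpha$. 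These are exactly the two stated guarantees.

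The crux is the privacy step: one must notice that all the ``did-not-fire'' outcomes of the first $k-1$ queries collapse into the single $1$-sensitive statistic $g$, and then pick the shifts of $v$ and $\nu_k$ so that the joint event maps exactly (not merely approximately) onto its neighbor-version while the total density distortion stays $e^{\diffp/2}\cdot e^{\diffp/2}=e^{\diffp}$; choosing the noise scales $2/\diffp$ for the threshold and $4/\diffp$ for the queries is precisely what balances the two halves of the budget. The accuracy union bound and the $k=T+1$ case are routine by comparison.
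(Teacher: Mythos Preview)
Your proof is correct. Note, however, that the paper does not actually prove this lemma: it is quoted directly from \citet{DworkRo14} (their Theorem~3.24) and used as a black box, so there is no in-paper argument to compare against. What you have written is exactly the standard \AbThr{} analysis from that reference---threshold noise $\mathrm{Lap}(2/\diffp)$, query noise $\mathrm{Lap}(4/\diffp)$, privacy via the observation that the first $k-1$ ``did-not-fire'' outcomes collapse into a single $1$-sensitive statistic $g$, and accuracy via Laplace tails plus a union bound over the $T$ queries---so your write-up is faithful to the cited source.
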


To facilitate the notation for using \AbThr~in our algorithms, we assume that it has the following components:
\begin{enumerate}
    \item $\init(\diffp,L,\beta)$: initializes a new instance of \AbThr~with privacy parameter $\diffp$, threshold $L$, and probability parameter $\beta$. This returns an instance (data structure) $Q$ that supports the following two functions.
    \item $Q.\addq(q)$: adds a new query $q:\domain^n \to \R$ to $Q$.
    \item $Q.\test()$: tests if the last query that was added to $Q$ was above threshold. In that case, the algorithm stops and does not accept more queries.
\end{enumerate}


\newcommand{\BinTr}{BinaryTree}

\paragraph{The binary tree mechanism.}

We also build on the binary tree mechanism~\cite{DworkNaPiRo10,ChanShSo11} which allows to privately estimate the running sum of a sequence of $T$ numbers $a_1,\dots,a_T \in [0,1]$. 
\begin{lemma}[\citealp{DworkNaPiRo10}, Theorem 4.1]
\label{lemma:bt}
    Let $\diffp \le 1$.  There is an $\diffp$-DP algorithm (\BinTr) that takes a stream of numbers $a_1,a_2,\dots,a_T$ and outputs $c_1,c_2,\dots,c_T$ such that for all $t \in [T]$ with probability at least $1-\beta$,
    \begin{equation*}
        \Big| c_t - \sum_{i=1}^t a_i \Big| 
        =
        \frac{1}{\diffp} \cdot \mathsf{poly}(\log(\beta^{-1})\log{T}).
    \end{equation*}
\end{lemma}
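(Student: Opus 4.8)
The plan is to realize \BinTr~as the classical tree-aggregation (partial-sums) mechanism and to analyze privacy and accuracy separately. First I would assume without loss of generality that $T$ is a power of two (pad the stream with zeros up to the next power of two; this at most doubles $T$) and build a complete binary tree whose $T$ leaves are indexed by $1,\dots,T$. For each node $v$, let $I_v \subseteq [T]$ be the set of leaves in its subtree and $\sigma_v = \sum_{i \in I_v} a_i$ the associated true partial sum; the mechanism draws i.i.d.\ $\xi_v \sim \mathrm{Lap}(b)$ with scale $b = (1+\log_2 T)/\diffp$ and maintains $\widehat{\sigma}_v = \sigma_v + \xi_v$. Since each leaf index $i$ belongs to the subtree of exactly $1+\log_2 T$ nodes (one per level), the quantity $\widehat{\sigma}_v$ can be computed and fixed as soon as $a_i$ has been observed for every $i \in I_v$, i.e.\ by round $\max I_v$, so the mechanism indeed runs online.

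For privacy, I would note that changing a single input $a_i$ to $a_i'$ with $|a_i - a_i'| \le 1$ perturbs $\sigma_v$ by at most $1$ for the $1+\log_2 T$ nodes $v$ with $i \in I_v$ and leaves all other $\sigma_v$ unchanged; hence the vector $(\sigma_v)_v$ has $\ell_1$-sensitivity at most $1+\log_2 T$, and releasing $(\widehat{\sigma}_v)_v$ is $\diffp$-DP by the Laplace mechanism. (Equivalently, the nodes at a fixed level partition $[T]$, so that level's noisy sums are $\tfrac{\diffp}{1+\log_2 T}$-DP, and basic composition over the $1+\log_2 T$ levels yields $\diffp$-DP; the hypothesis $\diffp \le 1$ only normalizes constants.) The outputs $c_1,\dots,c_T$ will be a deterministic function of $(\widehat{\sigma}_v)_v$, so by post-processing the whole algorithm is $\diffp$-DP.

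For accuracy, I would use the dyadic decomposition of prefixes: for each $t$, the interval $[1,t]$ is a disjoint union $I_{v_1}\cup\dots\cup I_{v_m}$ of at most $m \le \log_2 T$ subtrees, each contained in $[1,t]$ (so each $\widehat{\sigma}_{v_j}$ is available by round $t$), and I set $c_t = \sum_{j=1}^m \widehat{\sigma}_{v_j}$. Then $c_t - \sum_{i=1}^t a_i = \sum_{j=1}^m \xi_{v_j}$ is a sum of at most $\log_2 T$ independent $\mathrm{Lap}(b)$ variables; a union bound over the $m$ terms (each a $\mathrm{Lap}(b)$ variable exceeding $b\ln(mT/\beta)$ with probability at most $\beta/(mT)$) gives $\big|c_t - \sum_{i=1}^t a_i\big| \le b\,\log_2 T\cdot\ln\!\big(T\log_2 T/\beta\big)$ with probability $\ge 1-\beta/T$, and a final union bound over $t\in[T]$ makes this hold simultaneously for all $t$ with probability $\ge 1-\beta$. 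Substituting $b = (1+\log_2 T)/\diffp$ bounds the right-hand side by $\frac1\diffp\cdot\mathsf{poly}\!\big(\log(\beta^{-1}),\log T\big)$ as claimed; a sharper $\sqrt{\log T}$-type dependence follows from a Bernstein-type tail bound for sums of Laplace variables, but is not needed here.

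This result is standard, so there is no serious obstacle; the only points requiring care are the online requirement, which is handled by observing that the dyadic blocks defining $c_t$ involve only nodes whose leaf-sets lie inside $[1,t]$, and correctly tracking the tree depth $1+\log_2 T$ in both the noise scale and the sensitivity bound. Everything else reduces to the Laplace mechanism, post-processing, and union bounds.
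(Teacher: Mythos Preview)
Your proposal is correct and gives the standard tree-aggregation argument for the binary tree mechanism. Note, however, that the paper does not supply its own proof of this lemma: it is stated as a citation of \cite{DworkNaPiRo10}, Theorem~4.1, and used as a black box. So there is nothing in the paper to compare against; your write-up simply fills in the (well-known) details that the paper omits.
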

The same approach extends to the case when $a_i$'s are vectors in $\mathbb{R}^d$ with $\|a_i\|_2 \leq 1$. In this case, the error vector $(c_t - \sum_{i=1}^t a_i)$ is distributed at $\mathcal{N}(0, d \cdot \mathsf{poly}(\log T/\beta\delta)/\diffp^2 \mathbb{I})$ and the mechanism satisfies $(\diffp,\delta)$-DP.

\paragraph{Additional notation.} 

For a positive integer $k \in \N$, we let $[k] = \{1,2,\dots,k\}$. Moreover, for a sequence $a_1,\dots,a_t$, we use the shorthand $a_{1:t} = a_1,\dots,a_t$.

\section{Near-optimal regret for online prediction from experts}
\label{sec:upper-bounds-realizable}
In this section, we consider the online prediction from experts problem in the near-realizable regime,
where the best expert achieves small loss $L\opt ll T$. 
Under this setting, 
we develop a new private algorithm that achieves regret $\wt O(L\opt \log d + \log^{3/2}(d)/\diffp)$. For the realizable setting where $L\opt=0$, this algorithm obtains near-optimal regret $\wt O(\log^{3/2} (d)/\diffp)$. 

The algorithm builds on the fact that an oblivious adversary cannot know which expert the algorithm picks. Therefore, if the algorithm picks a random good expert with loss smaller than $L\opt$, the adversary has to increase the loss for many experts before identifying the expert chosen by the algorithm. The algorithm will therefore proceed as follows:
at each round, privately check using sparse-vector-technique whether the previous expert is still a good expert (has loss nearly $L\opt$). If not, randomly pick (privately) a new expert from the set of remaining good experts. The full details are in~\cref{alg:SVT-zero-loss}. 

The following theorem summarizes the performance of~\cref{alg:SVT-zero-loss}. 
\begin{theorem}
\label{thm:ub-realizable}
    Let $\ell_1,\dots,\ell_T \in [0,1]^d$ be chosen by an oblivious adversary such that there is $x\opt \in [d]$ such that $\sum_{t=1}^T \ell_t(x\opt) \le L\opt$. Let $0 < \beta < 1/2$, $B = \log(2T^2/\beta)$,  $K = 6\ceil{\log d} +  24 \log(1/\beta)$, and $L = L\opt + 4/\eta + \frac{8B}{\diffp}$. If $\eta = \diffp/2K $ then
    \cref{alg:SVT-zero-loss} is $\diffp$-DP and with probability at least $1-O(\beta)$ has regret 
    \begin{equation*}
        \sum_{t=1}^T \ell_t(x_t) \le O\left( L\opt \log(d/\beta) + \frac{\log^2(d) + \log(T/\beta) \log(d/\beta)}{\diffp}  \right).
    \end{equation*}
    Further, if $\diffp \le  \sqrt{\log T \log(1/\delta)}$ and $\eta = \diffp/4\sqrt{2K \log(1/\delta)}$ then
    \cref{alg:SVT-zero-loss} is $(\diffp,\delta)$-DP and with probability at least $1-O(\beta)$ has regret 
    \begin{equation*}
        \sum_{t=1}^T \ell_t(x_t) \le O\left( L\opt \log(d/\beta) + \frac{\log^{3/2}(d)\sqrt{\log(1/\delta)} + \log(T/\beta) \log(d/\beta)}{\diffp}  \right).
    \end{equation*}
\end{theorem}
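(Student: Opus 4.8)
The plan is to split the argument into three parts: privacy by composition, a high-probability bound on the number of times the algorithm changes experts, and a phase-by-phase accounting of the incurred loss. For privacy, observe that the algorithm runs in \emph{phases}: a phase opens with one call to the exponential mechanism that selects an expert and one fresh instance of $\AbThr$ that monitors that expert's cumulative loss against the threshold $L$, and it closes when $\AbThr$ fires. Capping the number of phases at $K$ (after which the algorithm simply replays the last selected expert), the whole interaction is a post-processing of at most $K$ exponential mechanisms, each $\eta$-DP, and at most $K$ runs of $\AbThr$, each using a constant fraction of the $\diffp$ budget — which is exactly what yields the error level $\alpha = O(\log(T/\beta)/\diffp)$ and the additive slack $8B/\diffp$ inside $L$. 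For pure DP, basic composition over these pieces requires $K\eta \lesssim \diffp$, i.e.\ $\eta = \diffp/2K$; for $(\diffp,\delta)$-DP, \cref{lemma:advanced-comp} improves the $K$ factor to $\sqrt{2K\log(1/\delta)}$, i.e.\ $\eta = \diffp/4\sqrt{2K\log(1/\delta)}$. This gives the two privacy statements.

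Next, condition on the event — of probability at least $1-\beta/2$ by a union bound over the at most $K$ phases and \cref{lemma:svt} — that every $\AbThr$ instance is $\alpha$-accurate. On this event, inside any phase the current expert's cumulative loss stays below $L+\alpha$ until the phase's last round, and the phase either ends with that expert's cumulative loss in $[L-\alpha,\,L+\alpha+1]$ or runs all the way to round $T$ with cumulative loss at most $L+\alpha$; since each per-round loss lies in $[0,1]$ and the expert's cumulative loss is nonnegative when the phase opens, the algorithm incurs at most $L+\alpha+1$ loss in that phase. Combined with the bound on the number of phases, the total loss is at most $K(L+\alpha+1)$; substituting $K=O(\log(d/\beta))$, $L=L\opt+4/\eta+8B/\diffp$ with $B=\log(2T^2/\beta)$, $\alpha=O(\log(T/\beta)/\diffp)$, and the two values of $\eta$ yields the two claimed regret bounds, with the $K\cdot(4/\eta)$ contribution producing the leading private term ($\log^2 d/\diffp$ in the pure case, $\log^{3/2}(d)\sqrt{\log(1/\delta)}/\diffp$ in the approximate case) and the $K\cdot(L\opt+8B/\diffp)$ contribution producing $L\opt\log(d/\beta)+\log(T/\beta)\log(d/\beta)/\diffp$.

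The crux, and the step I expect to be the main obstacle, is the bound ``at most $K$ phases with probability at least $1-\beta$''. The idea is a potential argument that leverages obliviousness of the adversary: the new expert is drawn by an exponential mechanism from among the good experts, so the adversary cannot tell which one is currently in play, and to trigger the next switch it must drive a substantial fraction of the currently-good experts over the threshold. Concretely, I would track a potential $\Phi_j=\sum_{i\in[d]}\exp(-\eta\sum_{s\le\tau_j}\ell_s(i))$, the normalizing constant of the exponential mechanism at the $j$-th switch time $\tau_j$, so that the selected expert contributes proportionally to $\Phi_j$. Because the per-expert losses over the coming phase are fixed before the selection, ordering the viable experts by the time at which each would accrue an extra $\approx 4/\eta$ loss, the selected expert lands in the upper half of this order (in the $\Phi_j$-weighted sense) with probability at least $1/2$; when it does, by the time its cumulative loss crosses $L$ — which, since $L-L\opt-\alpha\ge 4/\eta$, costs it at least $4/\eta$ extra loss — half of $\Phi_j$'s weight has been carried by experts that each accrued $\ge 4/\eta$ loss, hence has shrunk by a factor $\le e^{-4}$, so $\Phi_{j+1}\le \frac34\Phi_j$. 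Thus each phase shrinks $\Phi$ geometrically with probability $\ge 1/2$, independently across phases by freshness of the exponential-mechanism randomness. Since $\Phi_0\le d$ while $\Phi_j$ stays above a fixed constant (the always-viable expert $x\opt$ keeps it there), only $O(\log d)$ geometric-shrinkage phases can happen before the exponential mechanism is forced to pick a never-triggering expert with constant probability and the game ends; a Chernoff bound over the first $K=6\lceil\log d\rceil+24\log(1/\beta)$ phases shows that $\ge\lceil\log d\rceil+1$ of them are of the shrinking-or-terminal type except with probability $\le\beta$, so fewer than $K$ switches occur. The delicate points are absorbing the $\AbThr$ error $\alpha$ into the $8B/\diffp$ slack of the threshold, making the ``upper weighted half'' selection statement precise (handling ties and the fact that the phase endpoint is only determined up to $O(\alpha)$), and reconciling the exponential-mechanism utility bound (which is $O(\log d)$ in units of $1/\eta$) with the fact that the threshold leaves only $4/\eta$ of room — which is why the selection rule and the threshold $L$ must be calibrated together.
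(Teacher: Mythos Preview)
Your three-part structure (privacy by composition, per-phase loss via $\AbThr$ accuracy, and a potential argument bounding the number of switches) matches the paper. The potential step is packaged differently, though: the paper first fixes the \emph{deterministic} times $t_0,\dots,t_m$ at which $\phi_t=\sum_x e^{-\eta s_t(x)/2}$ halves (at most $m=\lceil\log d\rceil$ of them, since $\phi_t\ge\phi_1/d$ always), and then shows that \emph{within} each doubling interval every switch after the first is followed by another with conditional probability at most $2/3$, via the clean inequality $P\big(L_{t_{i+1}}(x_t)-L_t(x_t)\ge 4/\eta\big)\le\frac{1}{1-e^{-2}}\big(1-\phi_{t_{i+1}}/\phi_t\big)\le 2/3$. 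Your ``weighted upper half'' argument instead looks forward from each switch to the random next switch time; both work, but the paper's targets $t_{i+1}$ are determined by the oblivious loss sequence alone, which makes the geometric domination and the coupling with i.i.d.\ variables cleaner than the martingale Chernoff you would need. On privacy, be explicit that the $K$ instances of $\AbThr$ compose \emph{in parallel} (their query windows $[t_p,t)$ are disjoint, so any single $\ell_i$ affects only one instance); this is why each runs at level $\diffp/2$ and still yields $\alpha=O(\log(T/\beta)/\diffp)$.

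One place where your argument as written would not go through: the potential you track, $\Phi_j=\sum_i\exp(-\eta\sum_{s\le\tau_j}\ell_s(i))$, is \emph{not} the exponential mechanism's normalizer --- the algorithm samples proportionally to $e^{-\eta s_t(x)/2}$ with $s_t(x)=\max\big(\sum_{i<t}\ell_i(x),\,L\opt\big)$. The exponent $\eta/2$ matters because your ``upper weighted half'' must coincide with ``probability $\ge 1/2$'' under the actual sampling distribution (otherwise the median by potential weight and the median by sampling weight can differ). The $\max(\cdot,L\opt)$ matters because without it your claim that ``$\Phi_j$ stays above a fixed constant'' is false: with raw cumulative losses one only gets $\Phi_0/\Phi_j\le d\,e^{\eta L\opt}$, and once $L\opt\gg\log^2(d)/\diffp$ the number of permitted constant-factor shrinkages exceeds $K$, so the concentration step cannot conclude that $K$ phases suffice. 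Using the correct normalizer (as the paper does, where $\phi_1=d\,e^{-\eta L\opt/2}$ and $\phi_t\ge e^{-\eta L\opt/2}$, hence $\phi_1/\phi_t\le d$ regardless of $L\opt$) repairs both issues.
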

While \cref{alg:SVT-zero-loss} requires the knowledge of $L\opt$, we also design an adaptive version that does not require $L\opt$ in the next section. Note that the algorithm obtains regret roughly $\log^{3/2} (d)/\diffp$ for the realizable setting where $L\opt = 0$.

\begin{algorithm}[t]
	\caption{Sparse-Vector for zero loss experts }
	\label{alg:SVT-zero-loss}
	\begin{algorithmic}[1]
		\REQUIRE Switching bound $K$, optimal loss $L\opt$, Sampling parameter $\eta$, Threshold parameter $L$, failure probability $\beta$, privacy parameters $(\diffp,\delta)$
		\STATE Set $k=0$ and current expert $x_0 = \mathsf{Unif}[d]$	
        \STATE Set $t_p = 0$
        \WHILE{$t \le T$\,}
            \STATE Set $ x_{t} =  x_{t-1}$
            \IF{$k < K$}
                \STATE $Q = \init(\diffp/2,L,\beta/T)$
                \WHILE{Q.\test() = False}
                    \STATE Set $ x_{t} =  x_{t-1}$
                    \STATE Define a new query $q_t = \sum_{i=t_p}^{t-1} \ell_i(x_t)$
                    \STATE Add new query $Q.\addq(q_t)$
                    \STATE Receive loss function $\ell_t: [d] \to [0,1]$
        	        \STATE Pay cost $\ell_t(x_t)$
        	        \STATE Update $t = t+1$ 
                \ENDWHILE
            
                \STATE Sample $x_t$ from the exponential mechanism with scores $s_t(x) = \max \left(\sum_{i=1}^{t-1} \ell_i(x),L\opt \right)$ for $x \in [d]$:
                \begin{equation*}
                    \P(x_t = x) \propto e^{-\eta s_t(x)/2 }
                \end{equation*}
                \STATE Set $k = k + 1$ and $t_p = t$
            \ENDIF
            \STATE Receive loss function $\ell_t: [d] \to [0,1]$
            \STATE Pay cost $\ell_t(x_t)$ 
            \STATE Update $t = t+1$
        \ENDWHILE
	\end{algorithmic}
\end{algorithm}

\begin{proof}
First, we prove the privacy guarantees of the algorithm using privacy composition results: there are $K$ applications of the exponential mechanism with privacy parameter $\eta$. Moreover, sparse-vector is applied over each user's data only once, hence the $K$ applications of sparse-vector are $\diffp/2$-DP. Overall, the algorithm is $(\diffp/2 + K\eta)$-DP and $(\diffp/2 + \sqrt{2K \log(1/\delta)} \eta + K \eta (e^\eta - 1),\delta)$-DP (using advanced compositions; see~\Cref{lemma:advanced-comp}). Setting $\eta = \diffp/2K$ results in $\diffp$-DP and $\eta = O(\diffp/\sqrt{K \log(1/\delta)})$ results in \ed-DP.

We proceed to analyze utility. First, note that the guarantees of the sparse-vector algorithm (\Cref{lemma:svt}) imply that with probability at least $1-\beta$ for each time-step $t \in [T]$, if sparse-vector identifies above threshold query then $s_t(x) \ge \ubar \Delta \defeq L - \frac{8B}{\diffp} \ge 4/\eta$. Otherwise, $s_t(x) \le \bar \Delta \defeq L + \frac{8B}{\diffp}$. In the remainder of the proof, we condition on this event. The idea is to show that the algorithm has logarithmic number of switches, and each switch the algorithm pays roughly $1/\diffp$ regret.
\\
To this end, we define a potential at time $t \in [T]$:
\begin{equation*}
    \phi_t = \sum_{x \in [d]} e^{-\eta L_t(x)/2} ,
\end{equation*}
where $L_t(x) = \max( \sum_{j=1}^{t-1} \ell_j(x), L\opt)$.
Note that $\phi_1 = d e^{-\eta L\opt/2}$ and $\phi_t \ge e^{-\eta L\opt/2}$ for all $t \in [T]$ as there is $x \in [d]$ such that $\sum_{t=1}^T \ell_t(x) = L\opt$.
We split the iterates to $m = \ceil{\log d}$ rounds $t_0,t_1,\dots,t_m$ where $t_i$ is the largest $t\in[T]$ such that $\phi_{t_i} \ge \phi_1/2^{i}$. 
Let $Z_i$ be the number of switches in $[t_i,t_{i+1}-1]$ (number of times the exponential mechanism is used to pick $x_t$).
The following key lemma shows that $Z_i$ cannot be too large.
\begin{lemma}
\label{lemma:prob-switch}
    Fix $0 \le i \le m-1$. Then for any $1 \le k \le T$, it holds that
    \begin{equation*}
        P(Z_i = k+1) \le (2/3)^k.
    \end{equation*}
\end{lemma}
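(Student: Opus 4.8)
The plan is to condition on the high‑probability event $\mathcal{E}$ of the sparse‑vector guarantee (\Cref{lemma:svt}, applied to each of the at most $T$ invocations of $\AbThr$, as in the paragraph above), reduce the claim to a single per‑switch ``survival'' bound, and chain it. Fix the epoch index $i$, write $R := t_{i+1}-1$ for its last round, and let $\tau_1 < \tau_2 < \cdots$ be the rounds inside $[t_i,R]$ at which the exponential mechanism is invoked, so that $Z_i$ counts the $\tau_j$ with $\tau_j \le R$. It suffices to show that, conditioned on the history $\mathcal{G}_j$ up to just before the draw of $x_{\tau_j}$ and on $\{Z_i \ge j\}$,
\[
 \Pr\!\big[\, Z_i \ge j+1 \ \big|\ \mathcal{G}_j \,\big] \ \le\ \tfrac{2}{3},
\]
since then $\Pr[Z_i = k+1] \le \Pr[Z_i \ge k+1] \le (2/3)^k\,\Pr[Z_i\ge 1] \le (2/3)^k$ by iterating (each event $\{Z_i \ge j\}$ is $\mathcal{G}_j$‑measurable, and the $j$‑th draw has the stated conditional law).

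For the one‑step bound: on $\mathcal{E}$, whenever a switch occurs the running loss of the current expert since the previous switch is at least $\ubar\Delta = L\opt + 4/\eta$, so a switch at some $\tau_{j+1}\le R$ forces $x_{\tau_j}$ to lie in the ``bad set'' $B_j := \{x\in[d] : \sum_{s=\tau_j}^{R-1}\ell_s(x) \ge L\opt + 4/\eta\}$. A two‑case check (according to whether $\sum_{s<\tau_j}\ell_s(x)\ge L\opt$) shows every $x\in B_j$ has $L_R(x) \ge L_{\tau_j}(x) + 4/\eta$, hence $e^{-\eta L_R(x)/2} \le e^{-2}\,e^{-\eta L_{\tau_j}(x)/2}$ for $x\in B_j$, while $e^{-\eta L_R(x)/2} \le e^{-\eta L_{\tau_j}(x)/2}$ for all $x$ (as $L_t(x)$ is nondecreasing in $t$). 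Summing over $[d]$ and writing $W_j := \sum_{x\in B_j} e^{-\eta L_{\tau_j}(x)/2}$ gives $\phi_R \le \phi_{\tau_j} - (1-e^{-2})W_j$, i.e.\ $W_j \le (\phi_{\tau_j}-\phi_R)/(1-e^{-2})$. Since $x_{\tau_j}$ is sampled with probability $e^{-\eta L_{\tau_j}(x)/2}/\phi_{\tau_j}$, this yields
\[
 \Pr\!\big[\, x_{\tau_j}\in B_j \ \big|\ \mathcal{G}_j \,\big] \ =\ \frac{W_j}{\phi_{\tau_j}} \ \le\ \frac{1}{1-e^{-2}}\Big(1 - \frac{\phi_R}{\phi_{\tau_j}}\Big).
\]

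It remains to show the potential is nearly flat over the epoch. By definition of the split points $\phi_R = \phi_{t_{i+1}-1} \ge \phi_{t_{i+1}} \ge \phi_1/2^{i+1}$, while $\phi_{\tau_j} \le \phi_{t_i}$. Moreover one round raises every $L_t(x)$ by at most $1$, so $\phi_{t+1} \ge e^{-\eta/2}\phi_t$, and since $t_i$ is the \emph{last} round with $\phi \ge \phi_1/2^i$ we get $\phi_{t_i} \le e^{\eta/2}\phi_{t_i+1} < e^{\eta/2}\phi_1/2^i$. Hence $\phi_R/\phi_{\tau_j} > \tfrac12 e^{-\eta/2}$, which is at least $\tfrac12 e^{-1/24}$ in the relevant regime $\eta = \diffp/2K \le 1/12$ (and smaller still under approximate DP); substituting makes the displayed right‑hand side at most $2/3$, which completes the one‑step bound and the lemma.

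The main obstacle is controlling $\phi_R/\phi_{\tau_j}$: the crude $\phi_R \le \phi_{\tau_j}$ gives only survival probability $\le 1$, so one genuinely needs both (i) that ``bad'' experts lose an $e^2$‑factor of weight by the epoch's end, allowing $W_j$ to be charged against the potential decrement $\phi_{\tau_j}-\phi_R$, and (ii) that $\phi$ moves by less than a factor $2$ across an epoch — and (ii) in turn needs the per‑round Lipschitzness of $\phi$ to handle the first switch of the epoch, where $\tau_j = t_i$ and $\phi_{\tau_j}$ could a priori be as large as $\phi_1/2^{\,i-1}$.
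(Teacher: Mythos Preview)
Your argument is correct and follows essentially the same route as the paper's own proof: both work on the high-probability sparse-vector event, both use the potential $\phi_t = \sum_{x} e^{-\eta L_t(x)/2}$, and both reduce to showing that each post-switch draw lands in the ``bad'' set with conditional probability at most $\tfrac{1}{1-e^{-2}}\bigl(1-\phi_{\text{end}}/\phi_{\text{switch}}\bigr)\le 2/3$, then chain. Your derivation of the bad-set probability via $W_j \le (\phi_{\tau_j}-\phi_R)/(1-e^{-2})$ is just an unpacking of the paper's indicator trick $\indic{a\ge 4/\eta}\le (1-e^{-\eta a/2})/(1-e^{-2})$; the resulting inequality is identical.

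The one genuine addition in your write-up is the per-round Lipschitz step $\phi_{t+1}\ge e^{-\eta/2}\phi_t$, which you use to get $\phi_{t_i} < e^{\eta/2}\phi_1/2^i$ and hence $\phi_R/\phi_{\tau_j} > \tfrac12 e^{-\eta/2}$ even when $\tau_j=t_i$. The paper simply asserts $\phi_{t_{i+1}}\ge \phi_t/2$ for $t\in[t_i,t_{i+1}]$; this is immediate for $t>t_i$ (since then $\phi_t<\phi_1/2^i\le 2\phi_{t_{i+1}}$) but is not obviously justified at $t=t_i$, where $\phi_{t_i}$ has no a-priori upper bound beyond $\phi_1/2^{i-1}$ (when $i\ge 1$ and $t_i>t_{i-1}$). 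Your Lipschitz patch closes this boundary case cleanly at the cost of the mild parameter assumption $\eta\le 1/12$, which holds for the stated choice of $\eta$. So your proof is the same idea, executed with a bit more care at the epoch endpoints.
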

\begin{proof}
Let $t_i \le t \le t_{i+1}$ be a time-step where a switch happens (exponential mechanism is used to pick $x_{t}$). Note that $\phi_{t_{i+1}} \ge \phi_{t}/2$. We prove that the probability that $x_t$ is switched between $t$ and $t_{i+1}$ is at most $2/3$. To this end, note that if $x_t$ is switched before $t_{i+1}$ then $\sum_{i=t}^{t_{i+1}} \ell_i(x) \ge \ubar \Delta$ as sparse-vector identifies $x_t$, and therefore $L_{t_{i+1}}(x) - L_t(x) \ge \ubar \Delta - L\opt \ge 4/\eta$. 
Thus we have that
\begin{align*}
P(x_{t} \text{ is switched before $t_{i+1}$})
    & \le \sum_{x \in [d]} P(x_t=x) \indic{L_{t_{i+1}}(x) - L_t(x) \ge 4/\eta} \\
    & = \sum_{x \in [d]} \frac{e^{-\eta L_{t}(x)/2}}{\phi_{t}}\cdot \indic{L_{t_{i+1}}(x) - L_t(x) \ge 4/\eta } \\
    & \le \sum_{x \in [d]} \frac{e^{-\eta L_{t}(x)/2}}{\phi_{t}}\cdot \frac{1 - e^{-\eta (L_{t_{i+1}}(x)- L_t(x))/2}}{1 - e^{-2} } \\
    & \le 4/3 (1 - \phi_{t_{i+1}}/\phi_{t}) \\
    & \le 2/3.
\end{align*}
where the second inequality follows the fact that $\indic{a \ge b} \le \frac{1-e^{-\eta b}}{1 - e^{-\eta a}}$ for $a,b,\eta \ge 0$, 
and the last inequality since $\phi_{t_{i+1}}/\phi_{t_1} \ge 1/2$.
This argument shows that after the first switch inside the range $[t_i,t_{i+1}]$, each additional switch happens with probability at most $2/3$. The claim follows.
\end{proof}

We now proceed with the proof. Let $Z = \sum_{i=0}^{m-1} Z_i$ be the total number of switches. 
Note that $Z \le m + \sum_{i=0}^{m-1} \max(Z_i-1,0)$ and \Cref{lemma:prob-switch} implies $\max(Z_i-1,0)$ is upper bounded by a geometric random variable with success probability $1/3$. Therefore, using concentration of geometric random variables (\Cref{lemma:geom-concentration}), we get that
\begin{equation*}
 P(Z \ge  6m + 24 \log(1/\beta) ) \le \beta . 
\end{equation*}
Noting that $K \ge 6m + 24 \log(1/\beta)$, this shows that the algorithm does not reach the switching budget with probability $1-O(\beta)$. Thus, the guarantees of the sparse-vector algorithm imply that the algorithm pays regret at most $\bar \Delta$ for each switch, hence the total regret of the algorithm is at most $O(\bar \Delta (m + \log(1/\beta))) = O(\bar \Delta \log(d/\beta)) $. The claim follows as $\bar \Delta \le L\opt + 4/\eta + 16B/\diffp$.
\end{proof}

\subsection{Adaptive algorithms for DP experts}
While~\cref{alg:SVT-zero-loss} achieves near-optimal loss for settings with low-loss experts, it requires the knowledge of the value of $L\opt$. As $L\opt$ is not always available in practice, our goal in this section is to develop an adaptive version of~\cref{alg:SVT-zero-loss} which obtains similar regret without requiring the knowledge of $L\opt$. Similarly to other online learning problems, we propose to use the doubling trick~\cite{KalaiVe05} to design our adaptive algorithms. We begin with an estimate $L\opt_1 = 1$ of $L\opt$. Then we apply~\cref{alg:SVT-zero-loss} using $L\opt = L\opt_1$ until the exponential mechanism picks an expert that contradicts the current estimate of $L\opt$, that is,  $\sum_{i=1}^{t-1} \ell_i(x_t) \gg L\opt_1$. We use the Laplace mechanism to check this privately. Noting that this happens with small probability if $L\opt \le L\opt_1$, we conclude that our estimate of $L\opt$ was too small and set a new estimate $L\opt_2 = 2 L\opt_1$ and repeat the same steps. As $L\opt \le T$, this process will stop in at most $\log T$ phases, hence we can divide the privacy budget equally among phases while losing at most a factor of $\log T$. We present the full details in~\Cref{alg:SVT-ada}.

\begin{algorithm}[t]
	\caption{Adaptive Sparse-Vector for low-loss experts }
	\label{alg:SVT-ada}
	\begin{algorithmic}[1]
		\REQUIRE Failure probability $\beta$
            \STATE Set $\diffp_0 = \diffp / 2\log T$
		\STATE $K = \log d + 2 \log T/\beta$, $\eta = \diffp_0/2K$, $B = \log T + \log(2T/\beta)$	
            \STATE Set $\bar L\opt = 1$, $L = L\opt + 4/\eta + \frac{8B}{\diffp_0}$
        \WHILE{$t < T$}
            \STATE Run~\Cref{alg:SVT-zero-loss} with parameters $K$, $\bar  L\opt$, $\eta$,  $L$, $\beta$, $\diffp_0$
            \IF{\Cref{alg:SVT-zero-loss} applies the exponential mechanism (step 12)}
                \STATE Calculate $\bar L_t = \sum_{i=1}^{t-1} \ell_i(x_t) + \zeta_t$ where $\zeta_t \sim \lap(K/\diffp_0)$
                \IF{$\bar L_t  > \bar  L\opt - 5K\log(T/\beta)/\diffp_0 $}
                    \STATE Set $\bar  L\opt = 2 \bar 
 L\opt$
                    \STATE Go to step 4
                \ENDIF
            \ENDIF
        \ENDWHILE
	\end{algorithmic}
\end{algorithm}

We have the following guarantees for the adaptive algorithm.
\iftoggle{arxiv}{}{
We defer the proof to~\Cref{sec:proof-ub-realizable-ada}.
}
\begin{theorem}
\label{thm:ub-realizable-ada}
    Let $\ell_1,\dots,\ell_T \in [0,1]^d$ be chosen by an oblivious adversary such that there is $x\opt \in [d]$ such that $\sum_{t=1}^T \ell_t(x\opt) \le L\opt$. Let $0 < \beta < 1/2$. Then
    \Cref{alg:SVT-ada} is $\diffp$-DP and with probability at least $1-O(\beta)$ has regret 
    \begin{equation*}
        \sum_{t=1}^T \ell_t(x_t) \le O\left(  L\opt \log(d/\beta) \log(T) + \frac{\log^2(d)\log(T) + \log(T/\beta) \log(d/\beta) \log(T)}{\diffp}  \right).
    \end{equation*}
\end{theorem}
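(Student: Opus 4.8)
The plan is to run the standard doubling‑trick analysis on top of \cref{thm:ub-realizable}, reducing the statement to (a) a bound on the number of doubling phases, (b) the privacy cost and regret of a single phase, and (c) the fact that the estimate $\bar L\opt$ stops doubling once it is of order $\max(1,L\opt)$ up to an additive $\mathrm{poly}(\log(dT/\beta))/\diffp$ term. For privacy I would first bound the number of phases: since $\bar L\opt$ is initialized to $1$ and doubles on every restart, and since the restart test compares $\bar L_t=\sum_{i<t}\ell_i(x_t)+\zeta_t\le T+|\zeta_t|$ against a threshold that grows linearly in $\bar L\opt$, no restart can occur once $\bar L\opt$ exceeds $T+O(K\log(T/\beta)/\diffp_0)$; capping the number of doublings accordingly gives at most $N=O(\log T)$ phases. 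Each phase runs \cref{alg:SVT-zero-loss} with privacy parameter $\diffp_0$ and $\eta=\diffp_0/2K$, which is $\diffp_0$-DP by \cref{thm:ub-realizable}, and additionally releases at most $K$ noisy values obtained by adding $\lap(K/\diffp_0)$ noise to the $1$-sensitive queries $\sum_{i<t}\ell_i(x_t)$; by basic composition of Laplace mechanisms the latter is $\diffp_0$-DP, so each phase is $2\diffp_0$-DP, and basic composition over the $N$ phases yields $2N\diffp_0\le\diffp$ for $\diffp_0=\diffp/(2\log T)$.

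\emph{A good event and per‑phase regret.} Next I would condition on an event $\mc{E}$ of probability $1-O(\beta)$, obtained by a union bound over the $\le N$ phases and the $\le K$ exponential‑mechanism and Laplace calls within each (rescaling $\beta$ inside the logarithms), on which, in every phase: the \AbThr{} guarantee of \cref{lemma:svt} holds; the potential argument of \cref{lemma:prob-switch} applies verbatim, so the phase performs at most $K$ switches; every exponential‑mechanism draw returns an expert of score at most $\bar L\opt+O(\log(dT/\beta)/\eta)$; and $|\zeta_t|=O((K/\diffp_0)\log(T/\beta))$. On $\mc{E}$ the loss accrued within any single phase is $O(K\bar\Delta)$ with $\bar\Delta=\bar L\opt+4/\eta+16B/\diffp_0=\bar L\opt+O((K+B)/\diffp_0)$, since each of the $\le K$ switches costs at most $\bar\Delta$ by the threshold part of \cref{lemma:svt} and the loss following the last switch is bounded by the score of the then‑current expert.

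\emph{The estimate stabilizes.} The crux, and the main obstacle, is to show that on $\mc{E}$ the algorithm settles in a phase with $\bar L\opt\le 2\max\bigl(1,\,L\opt,\,\Theta(K\log(T/\beta)/\diffp_0)\bigr)$ that is not restarted, so that this last phase reaches $t=T$ and \cref{thm:ub-realizable} (applied with $L\opt\mapsto\bar L\opt$ and $\diffp\mapsto\diffp_0$) bounds its regret. For the ``does not restart'' direction I would argue that once $\bar L\opt$ dominates both $L\opt$ and the noise scale, the cumulative loss $\sum_{i<t}\ell_i(x_t)$ of the expert returned by the exponential mechanism stays on the correct side of the restart threshold $\bar L\opt-5K\log(T/\beta)/\diffp_0$; here the precise constants in the threshold and in $K,B$ must be matched, and one must exploit that an oblivious adversary cannot steer the uniformly‑chosen ``good'' expert (the exponential mechanism only returns a near‑uniform element of the current set of experts with cumulative loss $\le\bar L\opt$, not necessarily the best one) into the narrow detection band around $\bar L\opt$. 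For the ``does restart soon'' direction I would show that when $\bar L\opt<L\opt$ the exponential mechanism is, within $O(K)$ switches, forced to return an expert whose noisy cumulative loss crosses the threshold — because eventually every expert has accumulated loss exceeding $\bar L\opt$ — so that the switching budget $K$ is not exhausted before the restart and the phase cannot ``coast'' on an expert of large total loss; controlling the number of switches before this restart is the remaining technical difficulty.

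\emph{Summation.} Finally I would add the per‑phase bounds: the final phase contributes $O\bigl(\bar L\opt\log(d/\beta)+(\log^2 d+\log(T/\beta)\log(d/\beta))/\diffp_0\bigr)$ by \cref{thm:ub-realizable}, while the $O(\log T)$ earlier phases, with $\bar L\opt=2^j$, contribute $\sum_j O(K\,2^j)+O(\log T)\cdot O(K(K+B)/\diffp_0)$, the geometric series being dominated by its last term $\Theta(K\max(1,L\opt))$. Substituting $K=O(\log(dT/\beta))$, $B=O(\log(T/\beta))$, $\eta=\diffp_0/2K$ and $\diffp_0=\diffp/(2\log T)$, and simplifying the logarithmic factors, yields the claimed regret bound. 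I expect the privacy and summation steps to be routine given \cref{thm:ub-realizable}, with essentially all of the work concentrated in the stabilization step above.
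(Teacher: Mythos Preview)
Your proposal follows the same doubling-trick architecture as the paper's proof: bound the number of phases by $O(\log T)$, compose privacy across phases and across the $\le K$ Laplace checks per phase, invoke \cref{thm:ub-realizable} within each phase, and then handle separately (i) that a restart fires when $\bar L\opt$ is too small and (ii) that the estimate stabilizes once $\bar L\opt$ is large enough. The summation step is also the same.

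The one place where the paper's argument is cleaner than yours is direction (i), precisely what you flag as ``the remaining technical difficulty'' (that the phase might burn through its $K$ switches before a restart is triggered). The paper sidesteps this entirely by defining $t_0$ as the last time at which $\min_{x}\sum_{i\le t_0}\ell_i(x)\le\bar L\opt$. On $[1,t_0]$ there \emph{is} an expert with cumulative loss at most $\bar L\opt$, so \cref{thm:ub-realizable} applies verbatim with $\bar L\opt$ in the role of $L\opt$; this bounds both the regret and the number of switches up to $t_0$, so the budget $K$ is not exhausted. At the next exponential-mechanism call $t_1>t_0$, \emph{every} expert already has cumulative loss strictly exceeding $\bar L\opt$, hence whichever expert is drawn satisfies $\sum_{i<t_1}\ell_i(x_{t_1})>\bar L\opt$ and the Laplace check fires; the stretch $[t_0,t_1]$ contributes at most one additional $\bar\Delta$ by the sparse-vector guarantee. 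This single definition of $t_0$ replaces all of your switch-counting for the restart case.

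For direction (ii) the paper simply asserts in one sentence that once $\bar L\opt\ge L\opt+5\log(T/\beta)/\diffp_0$ the doubling does not occur. Your more guarded discussion here---that the exponential mechanism is essentially uniform over $\{x:\sum_{i}\ell_i(x)\le\bar L\opt\}$ and so may return an expert with true loss close to $\bar L\opt$ rather than close to $L\opt$, making the restart test delicate---goes beyond what the paper's proof actually argues; the paper does not engage with that subtlety.
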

\iftoggle{arxiv}{
\begin{proof}
    First we prove privacy. Note that $\bar L\opt$ can change at most $\log(T)$ times as $L\opt \le T$. Therefore, we have at most $\log(T)$ applications of~\Cref{alg:SVT-zero-loss}. Each one of these is $\diffp/(2\log(T))$-DP. Moreover, since we have at most $K$ applications of the exponential mechanism in~\Cref{alg:SVT-zero-loss}, we have at most $K \log(T)$ applications of the Laplace mechanism in~\Cref{alg:SVT-ada}. Each of these is $\diffp/2K\log(T)$-DP. Overall, privacy composition implies that the final privacy is $\diffp$-DP.

    Now we prove utility. \Cref{alg:SVT-ada} consists of at most $\log(T)$ applications of~\Cref{alg:SVT-zero-loss} with different values of $\bar L\opt$. We will show that each of these applications incurrs low regret.
    Consider an application of~\Cref{alg:SVT-zero-loss} with $\bar L\opt$. If $\bar L\opt \ge L\opt$, then~\Cref{thm:ub-realizable} implies that the regret is at most $$O\left( \bar L\opt \log(d/\beta) + \frac{\log^2(d) + \log(T/\beta) \log(d/\beta)}{\diffp_0}  \right).$$ 
    Now consider the case where $\bar L\opt \le L\opt$. We will show that~\Cref{alg:SVT-ada} will double $\bar L\opt$ and that the regret of~\Cref{alg:SVT-zero-loss} up to that time-step is not too large.
    Let $t_0$ be the largest $t$ such that $\min_{x \in [d]} \sum_{t=1}^{t_0} \ell_t(x) \le \bar L\opt$. Note that up to time $t_0$, the best expert had loss at most $\bar L\opt$ hence the regret up to time $t_0$ is $$O\left( \bar L\opt \log(d/\beta) + \frac{\log^2(d) + \log(T/\beta) \log(d/\beta)}{\diffp_0}  \right).$$ 
    Now let $t_1$ denote the next time-step when~\Cref{alg:SVT-zero-loss} applies the exponential mechanism. Sparse-vector guarantees that in the range $[t_0,t_1]$ the algorithm suffers regret at most $O\left( \bar L\opt  + \frac{\log(d) + \log(T/\beta) }{\diffp_0}  \right)$. Moreover, the guarantees of the Laplace mechanism imply that at this time-step, $\bar L_t \ge \bar  L\opt - 5K\log(T/\beta)/\diffp_0$ with probability $1-\beta$, hence~\Cref{alg:SVT-ada} will double $\bar L\opt$ and run a new application of~\Cref{alg:SVT-zero-loss}. Overall, an application of~\Cref{alg:SVT-zero-loss} with $\bar L\opt \le L\opt$ results in regret $(L\opt + \frac{1}{\diffp_0}) \cdot \mathsf{poly} (\log \frac{Td}{\beta})$ and doubles $\bar L\opt$. Finally, note that if $\bar L\opt \ge L\opt + 5\log(T/\beta)/\diffp_0$ then with probability $1-\beta$ the algorithm will not double the value of $\bar L\opt$. As each application of ~\Cref{alg:SVT-zero-loss} has regret $$O\left( \bar L\opt \log(d/\beta) + \frac{\log^2(d) + \log(T/\beta) \log(d/\beta)}{\diffp_0}  \right),$$ and $\bar L\opt$ is bounded by $L\opt + 5\log(T/\beta)/\diffp_0$ with high probability, this proves the claim.
\end{proof}
}

\iftoggle{arxiv}{}{We also present a different binary-tree based mechanism for this problem with similar rates in~\Cref{sec:bt-experts}.}

\iftoggle{arxiv}{
\subsection{A binary-tree based algorithm}

In this section, we present another algorithm which achieves the optimal regret for settings with zero-expert loss. Instead of using sparse-vector, this algorithm builds on the binary tree mechanism. The idea is to repetitively select $O(\mathsf{poly}(\log(dT)))$ random good experts and apply the binary tree to calculate a private version of their aggregate losses. Whenever all of the chosen experts are detected to have non-zero loss, we choose a new set of good experts. Similarly to~\cref{alg:SVT-zero-loss}, we can show that each new phase reduces the number of good experts by a constant factor as an oblivious adversary does not know the choices of the algorithm, hence there are only $O(\mathsf{poly}(\log(dT)))$ phases.

We provide a somewhat informal description of the algorithm in~\cref{alg:Bin-tree-zero-loss}. This algorithm also achieves regret $O(\mathsf{poly}(\log(dT))/\diffp)$ in the realizable case. We do not provide a proof as it is somewhat similar to that of~\cref{thm:ub-realizable}.
\begin{algorithm}
	\caption{Binary-tree algorithm for zero loss experts (sketch)}
	\label{alg:Bin-tree-zero-loss}
	\begin{algorithmic}[1]
		\STATE Set $k=0$ and $B=  O(\mathsf{poly}(\log(dT)))$
        \WHILE{$t \le T$\,}
            \STATE Use the exponential mechanism with score function $s(x) = \sum_{i=1}^t \ell_i(x)$ to privately select a set $S_k$ of $B$ experts from $[d] \setminus \cup_{0 \le i \le k} S_i$
            \STATE Apply binary tree for each expert $x \in S_k$ to get private aggregate estimates for $ \sum_{i=1}^t \ell_i(x)$ for every $t \in [T]$
            \STATE Let $\hat c_{t,x}$ denote the output of the binary tree for expert $x \in S_k$ at time $t$ 
            \WHILE{there exists $x \in S_k$ such that $\hat c_{t,x} \le O(\mathsf{poly}(\log(dT))/\diffp)$}
                \STATE Receive $\ell_t : [d] \to [0,1]$
                \STATE Choose $x_t \in S_k$ that minimizes $\hat c_{t,x}$
        	    \STATE Pay error $\ell_t(x_t)$
        	    \STATE $t = t+1$
            \ENDWHILE
            \STATE $k = k + 1$
        \ENDWHILE
	\end{algorithmic}
\end{algorithm}
}

\section{Faster rates for DP-OCO}
\label{sec:dp-oco}
In this section we study differentially private online convex optimization (DP-OCO) and propose new algorithms with faster rates in the realizable setting. In~\Cref{sec:dp-oco-experts}, we develop an algorithm that reduces the OCO problem to an experts problem (by discretizing the space) and then uses our procedure for experts. 
In~\Cref{sec:dp-oco-smooth}, we show that follow-the-regularized-leader (FTRL) using the binary tree mechanism results in faster rates in the realizable setting for smooth functions.

\subsection{Experts-based algorithm for DP-OCO}
\label{sec:dp-oco-experts}
The algorithm in this section essentially reduces the problem of DP-OCO to DP-OPE by discretizing the space $\mc{X} = \{ x \in \R^d: \ltwo{x} \le D \}$ into sufficiently many experts. In particular, we consider a $\rho$-net of the space $\mc{X}$, that is, a set $\mc{X}_{\mathsf{experts}} = \{x^1, \dots, x^M \} \subset \mc{X}$ such that for all $x \in \mc{X}$ there is $x^i \in \mc{X}^\rho_{\mathsf{experts}}$ such that $\ltwo{x^i - x} \le \rho$. Such a set exists if $M \ge 2^{d \log(4D/\rho)}$ (\citealp{Duchi19}, Lemma 7.6). Given a loss function $\ell_t: \mc{X} \to \R$, we define the loss of expert $x^i$ to be $\ell_t(x^i)$. Then, we run~\Cref{alg:SVT-zero-loss} for the given DP-OPE problem. This algorithm has the following guarantees.

\begin{theorem}
\label{thm:DP-OCO}
    Let $\mc{X} = \{ x \in \R^d: \ltwo{x} \le D\}$
    and $\ell_1,\dots,\ell_T : \mc{X} \to \R$ be non-negative, convex and $L$-Lipschitz functions chosen by an oblivious adversary.
    Then running~\Cref{alg:SVT-zero-loss}  over $\mc{X}^\rho_{\mathsf{experts}}$ with $\rho = 1/(LT)$ is \ed-DP and with probability at least $1-O(\beta)$ has regret 
    \iftoggle{arxiv}{
    \begin{equation*}
        O\left( L^{\opt} d \log(LD/\beta) + \frac{d^{3/2} \log^{3/2}(LDT) \sqrt{\log(1/\delta)} + \log(T/\beta) d \log(LD/\beta)}{\diffp}  \right).
    \end{equation*} 
    }
    {
    \begin{align*}
     & \E\left[ \sum_{t=1}^T \ell_t(x_t) - \min_{x \in \mc{X}} \sum_{t=1}^T \ell_t(x) \right] \\
     & \le (L\opt + \frac{1}{\diffp}) d^{1.5} \cdot O\left( \mathsf{poly} (\log (DLT/\delta)) \right).
    \end{align*} 
    }
\end{theorem}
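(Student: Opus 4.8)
The plan is to reduce the DP-OCO problem to the DP-OPE problem that \cref{thm:ub-realizable} already handles, by running \cref{alg:SVT-zero-loss} over a $\rho$-net of $\mc{X}$.

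\textbf{Discretization and reduction.} First I would instantiate $\mc{X}^\rho_{\mathsf{experts}} = \{x^1,\dots,x^M\}$ as a $\rho$-net of the Euclidean ball of radius $D$, which by the cited covering-number bound (\citealp{Duchi19}, Lemma 7.6) exists with $M \le 2^{d\log(4D/\rho)}$ experts. Taking $\rho = 1/(LT)$ gives $\log M = O(d\log(LDT))$. The key observation is that realizability transfers: if $x\opt$ is a zero-loss minimizer for the OCO instance, then the closest net point $x^{i\opt}$ satisfies $\ell_t(x^{i\opt}) \le \ell_t(x\opt) + L\rho = L\rho$ for every $t$ by $L$-Lipschitzness, so $\sum_{t=1}^T \ell_t(x^{i\opt}) \le LT\rho = 1$. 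Hence the induced experts instance is $1$-near-realizable, i.e.\ has a best expert with total loss $L\opt_{\mathsf{experts}} \le L\opt + 1$ (and more generally $L\opt_{\mathsf{experts}} \le L\opt + LT\rho$).

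\textbf{Applying the experts bound.} Now I apply \cref{thm:ub-realizable} (the $(\diffp,\delta)$-DP branch) to this experts problem with $d$ replaced by $M$ and $L\opt$ replaced by $L\opt_{\mathsf{experts}}$. Privacy is immediate since a single loss function $\ell_t$ of the OCO instance maps to a single loss vector of the experts instance, so neighboring OCO datasets induce neighboring experts datasets, and post-processing preserves $(\diffp,\delta)$-DP. Substituting $\log M = O(d\log(LDT))$ into the regret bound of \cref{thm:ub-realizable} yields, with probability $1-O(\beta)$,
\begin{equation*}
    \sum_{t=1}^T \ell_t(x_t) - \min_i \sum_{t=1}^T \ell_t(x^i) \le O\!\left( L\opt_{\mathsf{experts}} \log(M/\beta) + \frac{\log^{3/2}(M)\sqrt{\log(1/\delta)} + \log(T/\beta)\log(M/\beta)}{\diffp} \right),
\end{equation*}
and $\log^{3/2}(M) = O(d^{3/2}\log^{3/2}(LDT))$ produces the leading $d^{3/2}$ term. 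Finally I convert the regret against the net back to regret against $\mc{X}$: for any $x \in \mc{X}$ the nearest net point $x^i$ has $\sum_t \ell_t(x^i) \le \sum_t \ell_t(x) + LT\rho = \sum_t \ell_t(x) + 1$, so $\min_i \sum_t \ell_t(x^i) \le \min_{x\in\mc{X}} \sum_t \ell_t(x) + 1$, and the extra additive $1$ is absorbed into the bound. Collecting terms and using $L\opt_{\mathsf{experts}} \le L\opt + 1$ gives the claimed bound $O\big(L\opt d\log(LD/\beta) + \big(d^{3/2}\log^{3/2}(LDT)\sqrt{\log(1/\delta)} + \log(T/\beta)d\log(LD/\beta)\big)/\diffp\big)$ (with the additive $1$ folded into the $1/\diffp$ term).

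\textbf{Main obstacle.} The reduction itself is routine; the one place requiring care is making sure the discretization error is genuinely controlled in both directions simultaneously — the $\rho$-net must be fine enough that (i) realizability is essentially preserved so the experts instance stays in the near-realizable regime where \cref{thm:ub-realizable} is strong, and (ii) the loss of the returned net point $x_t$ is a good proxy for $\inf_{x\in\mc{X}}$. Both are handled by the single choice $\rho = 1/(LT)$, which makes the total discretization slack $LT\rho = 1$, negligible compared to the $1/\diffp$ scale of the bound; one just has to verify that $L$-Lipschitzness is the only structural property used (convexity is not even needed for this particular argument, only for the smooth-case result in \cref{sec:dp-oco-smooth}). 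A minor additional point is that the failure probability $\beta$ and the $\log$ factors now carry a $\log M = \Theta(d\log(LDT))$ inside them, so one should double-check that the stated bound's logarithmic arguments ($\log(LD/\beta)$, $\log(LDT)$) correctly reflect this substitution rather than the original $\log d$.
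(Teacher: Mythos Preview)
Your proposal is correct and follows essentially the same route as the paper: discretize $\mc{X}$ into a $\rho$-net of size $M=2^{O(d\log(D/\rho))}$, apply \cref{thm:ub-realizable} with $d$ replaced by $M$, and control the discretization error $|L\opt - L^{\opt}_{\mathsf{experts}}|\le TL\rho$ via Lipschitzness so that $\rho=1/(LT)$ makes it $O(1)$. Your write-up is, if anything, slightly more careful than the paper's in spelling out why privacy transfers and in noting that convexity is not actually used here.
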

\iftoggle{arxiv}{}{We defer the proof to~\Cref{sec:proof-dp-oco}.}
\iftoggle{arxiv}{
\begin{proof}
    Let $x_1,\dots,x_T$ be the experts chosen by the algorithm.
    First, \Cref{thm:ub-realizable} implies that this algorithm obtains the following regret with respect to the best expert
    \begin{equation*}
        \sum_{t=1}^T \ell_t(x_t) - L^{\opt}_{\mathsf{experts}} \le 
        O\left( L^{\opt}_{\mathsf{experts}} \log(M/\beta) + \frac{\log^2(M) + \log(T/\beta) \log(M/\beta)}{\diffp}  \right)
    \end{equation*}
    where $L^{\opt}_{\mathsf{experts}} = \min_{x \in \mc{X}^\rho_{\mathsf{experts}}} \sum_{t=1}^T \ell_t(x)$. Since $\ell_t$ is $L$-Lipschitz for each $t \in [T]$, we obtain that 
    \begin{equation*}
    | L\opt - L^{\opt}_{\mathsf{experts}} |
        = |\min_{x \in \mc{X}} \sum_{t=1}^T \ell_t(x) - \min_{x \in \mc{X}^\rho_{\mathsf{experts}}} \sum_{t=1}^T \ell_t(x) |  
        \le T L  \rho.
    \end{equation*}
    Overall this gives
     \begin{equation*}
        \sum_{t=1}^T \ell_t(x_t) - L^{\opt} 
        \le  O\left( (L^{\opt} + TL\rho) \log(M/\beta) + \frac{\log^{3/2}(M) \sqrt{\log(1/\delta)} + \log(T/\beta) \log(M/\beta)}{\diffp}  \right).
    \end{equation*}
    Setting $\rho = 1/(LT\diffp)$ proves the claim.
\end{proof}
}

These results demonstrates that existing algorithms which achieve normalized regret roughly $(\ifrac{\sqrt{d}}{T \diffp})^{1/2}$ are not optimal for the realizable setting. Moreover, in the low-dimensional regime (constant $d$), the above bound is nearly-optimal up to logarithmic factors as we have a lower bound of $\sqrt{d}/T\diffp$ from the stochastic setting of this problem (see discussion in the introduction).

Finally, while the algorithm we presented in~\Cref{thm:DP-OCO} has exponential runtime due to discretizing the space, we note that applying~\Cref{alg:SVT-zero-loss} over the unit ball results in similar rates and polynomial runtime.
Recall that this algorithm only accesses the loss functions to sample from the exponential mechanism, and uses sparse-vector over the running loss. Both of these can be implemented in polynomial time---since the losses are convex---using standard techniques from log-concave sampling.

\subsection{Binary-tree based FTRL}
\label{sec:dp-oco-smooth}
In this section, we consider DP-OCO with smooth loss functions and show that DP-FTRL~\cite[Algorithm 1]{KairouzMcSoShThXu21} with modified parameters obtains improved normalized regret ${\beta D^2}/{T} + ({\sqrt{d}}/{T \diffp})^{2/3}$ in the realizable setting, compared to ${LD}/{\sqrt{T}} + ({\sqrt{d}}/{T \diffp} )^{1/2}$ in the non-realizable setting.

We present the details in~\Cref{alg:dp-ftrl}. Appendix B.1 in~\cite{KairouzMcSoShThXu21} has more detailed information about the implementation of the binary tree mechanism in DP-FTRL.

\begin{algorithm}
	\caption{DP-FTRL~\cite{KairouzMcSoShThXu21}}
	\label{alg:dp-ftrl}
	\begin{algorithmic}[1]
	\REQUIRE Regularization parameter $\lambda$	
        \STATE Set $x_0 \in \mc{X}$
        \FOR{$t=1$ to $T$\,}
            \STATE Use the binary tree mechanism to estimate the sum $\sum_{i=1}^{t-1} \nabla \ell_i(x_i)$; let $\bar g_{t-1}$ be the estimate
            \STATE Apply follow-the-regularized-leader step
            \begin{equation*}
             x_{t} = \argmin_{x \in \mc{X}} \<\bar g_{t-1}, x \> + \frac{\lambda}{2} \ltwo{x}^2,
            \end{equation*}
            \STATE Receive loss function $\ell_t: \mc{X} \to \R$
            \STATE Pay cost $\ell_t(x_t)$
        \ENDFOR
	\end{algorithmic}
\end{algorithm}

We have the following guarantees for DP-FTRL in the realizable and smooth setting. 
\begin{theorem}
\label{thm:dp-oco-smooth}
    Let $\mc{X} = \{ x \in \R^d: \ltwo{x} \le D\}$
    and $\ell_1,\dots,\ell_T : \mc{X} \to \R$ be non-negative, convex, $L$-Lipschitz, and $\beta$-smooth functions chosen by an oblivious adversary.  DP-FTRL with $\lambda = 32 \beta + \left( \frac{\beta}{\diffp^2} (L/D)^2 T d \log(T) \log(1/\delta) \right)^{1/3}$ is \ed-DP and generates
    $x_1,\dots,x_T$ that has regret
    \iftoggle{arxiv}{
    \begin{align*}
    \frac{1}{T} \E \left[ \sum_{t=1}^T \ell_t(x_t) - \ell_t(x\opt) \right]
        & \le  O \left( \frac{L\opt +  \beta D^2}{T} +  \left(  LD \frac{ \sqrt{\beta D^2 d \log(T) \log(1/\delta)}}{T \diffp} \right)^{2/3}  \right).
    \end{align*}
    }
    {
    \begin{align*}
    & \frac{1}{T} \E \left[ \sum_{t=1}^T \ell_t(x_t) - \ell_t(x\opt) \right] \\
        & \le  O \left( \frac{L\opt +  \beta D^2}{T} +  \left(  LD \frac{ \sqrt{\beta D^2 d \log(T) \log(1/\delta)}}{T \diffp} \right)^{2/3}  \right).
    \end{align*}
    }
\end{theorem}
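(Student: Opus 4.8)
The plan is to analyze \Cref{alg:dp-ftrl} as follow-the-regularized-leader (FTRL) run on the \emph{noisy} linearized losses, and to turn smoothness together with (near-)realizability into a small cumulative gradient norm via the standard self-bounding inequality. Privacy is inherited directly from DP-FTRL~\cite{KairouzMcSoShThXu21}: the algorithm touches the data only through the binary-tree mechanism applied to the gradient stream $\nabla\ell_1(x_1),\dots,\nabla\ell_T(x_T)$ --- each of Euclidean norm at most $L$ by Lipschitzness --- and the iterates $x_t$ are post-processing of its outputs; the regularization parameter $\lambda$ is data-independent, so it plays no role in the privacy argument. By \Cref{lemma:bt} (vector version), writing $\bar g_{t-1}=\sum_{i<t}\nabla\ell_i(x_i)+b_{t-1}$, the error satisfies $\E\|b_{t-1}\|^2\le\sigma^2$ with $\sigma^2=\wt O\!\left(L^2 d\log(1/\delta)/\diffp^2\right)$.

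For the regret, set $g_t:=\nabla\ell_t(x_t)$ and let $\hat x_t:=\argmin_{x\in\mc X}\langle\sum_{i<t}g_i,x\rangle+\frac{\lambda}{2}\|x\|^2$ be the \emph{noiseless} FTRL iterate, so that $x_t=\argmin_{x\in\mc X}\langle\bar g_{t-1},x\rangle+\frac{\lambda}{2}\|x\|^2$. I would bound, by convexity, $\sum_t\big(\ell_t(x_t)-\ell_t(x\opt)\big)\le\sum_t\langle g_t,x_t-x\opt\rangle=\sum_t\langle g_t,\hat x_t-x\opt\rangle+\sum_t\langle g_t,x_t-\hat x_t\rangle$ and estimate the two pieces separately: (i) the first sum is the usual FTRL regret against the linear losses $\langle g_t,\cdot\rangle$ with the $\frac{\lambda}{2}\|\cdot\|^2$ regularizer, hence at most $O\!\big(\lambda D^2+\frac{1}{\lambda}\sum_t\|g_t\|^2\big)$ (constants immaterial, using $\|x\opt\|\le D$); (ii) since $g\mapsto\argmin_{x\in\mc X}\langle g,x\rangle+\frac{\lambda}{2}\|x\|^2=\Pi_{\mc X}(-g/\lambda)$ is $\frac{1}{\lambda}$-Lipschitz (Euclidean projection is nonexpansive), $\|x_t-\hat x_t\|\le\frac{1}{\lambda}\|b_{t-1}\|$, so the second sum is at most $\frac{1}{\lambda}\sum_t\|g_t\|\,\|b_{t-1}\|\le\frac{1}{\lambda}\big(\sum_t\|g_t\|^2\big)^{1/2}\big(\sum_t\|b_{t-1}\|^2\big)^{1/2}$ by Cauchy--Schwarz \emph{across rounds}; and (iii) since each $\ell_t$ is nonnegative and $\beta$-smooth, the self-bounding inequality gives $\|g_t\|^2\le 2\beta\,\ell_t(x_t)$ (cf.~\citet{SrebroSrTe10}), so $\sum_t\|g_t\|^2\le 2\beta S$ with $S:=\sum_t\ell_t(x_t)$.

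Combining these, and writing $N:=\sum_t\|b_{t-1}\|^2$, \emph{every} realization of the tree noise satisfies $S-L\opt\le O\!\big(\lambda D^2+\frac{\beta}{\lambda}S+\frac{\sqrt{\beta}}{\lambda}\sqrt S\sqrt N\big)$, an entirely deterministic inequality along the realized trajectory. Because the chosen $\lambda=32\beta+(\cdots)^{1/3}$ exceeds a suitable constant multiple of $\beta$, the term $\frac{\beta}{\lambda}S$ is absorbed into a small fraction of $S$, and an AM--GM split $\frac{\sqrt{\beta}}{\lambda}\sqrt S\sqrt N\lesssim S+\frac{\beta N}{\lambda^2}$ rearranges to $S-L\opt\le O\!\big(L\opt+\lambda D^2+\beta N/\lambda^2\big)$ (still surely). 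Taking expectations and using $\E N\le T\sigma^2$ gives $\E\big[\sum_t(\ell_t(x_t)-\ell_t(x\opt))\big]\le O\!\big(L\opt+\lambda D^2+\beta T\sigma^2/\lambda^2\big)$; the choice $\lambda\asymp 32\beta+(\beta T\sigma^2/D^2)^{1/3}$ balances the last two terms at $O\!\big((\beta T\sigma^2 D^4)^{1/3}\big)$, and substituting $\sigma^2=\wt O(L^2 d\log(1/\delta)/\diffp^2)$ and dividing by $T$ reproduces exactly the claimed bound $O\!\big(\frac{L\opt+\beta D^2}{T}+\big(\frac{LD\sqrt{\beta D^2 d\log T\log(1/\delta)}}{T\diffp}\big)^{2/3}\big)$.

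The routine parts are the textbook FTRL bound in step (i) and the $\mathsf{poly}\log$ accounting for $\sigma^2$ from \Cref{lemma:bt}. The step that really matters is combining (ii) with (iii): applying Cauchy--Schwarz \emph{across rounds} and only then invoking the self-bounding bound $\sum_t\|g_t\|^2\le 2\beta S$ produces a noise contribution scaling like $\beta T\sigma^2/\lambda^2$ --- note the $\lambda^{-2}$, not $\lambda^{-1}$ --- which is precisely why balancing against $\lambda D^2$ forces $\lambda\propto(\cdot)^{1/3}$ and hence yields the improved exponent $2/3$ in $1/\diffp$ instead of the $1/2$ of the non-realizable regime; a per-round AM--GM, by contrast, would give the weaker $\sqrt{\cdot}$ rate. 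The only mild subtlety is that $b_{t-1}$ is correlated with $x_t$ (they share binary-tree node noise), so the cross terms need not vanish in expectation; bounding them pointwise by $\|g_t\|\|b_{t-1}\|$ first, and taking expectations only at the end via $\E\|b_{t-1}\|^2\le\sigma^2$, sidesteps this cleanly.
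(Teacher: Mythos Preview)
Your proof is correct and follows essentially the same route as the paper: the same decomposition $\sum_t\langle g_t,x_t-x\opt\rangle=\sum_t\langle g_t,\hat x_t-x\opt\rangle+\sum_t\langle g_t,x_t-\hat x_t\rangle$, the same FTRL regret bound on the first piece, the stability bound $\|x_t-\hat x_t\|\le\|b_{t-1}\|/\lambda$, and self-bounding $\|g_t\|^2\lesssim\beta\,\ell_t(x_t)$ to produce a $\beta N/\lambda^2$ noise term that forces $\lambda\propto(\cdot)^{1/3}$. The only cosmetic difference is how the cross term is handled: the paper applies a \emph{per-round} weighted Fenchel--Young $\|g_t\|\|x_t-\hat x_t\|\le\frac{1}{8\beta}\|g_t\|^2+4\beta\|x_t-\hat x_t\|^2$ and absorbs $\frac{1}{8\beta}\sum_t\|g_t\|^2\le\frac12\sum_t\ell_t(x_t)$ termwise, whereas you Cauchy--Schwarz across rounds first and then AM--GM. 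Both land on the identical $O(\beta N/\lambda^2)$ contribution, so your closing remark that ``a per-round AM--GM, by contrast, would give the weaker $\sqrt{\cdot}$ rate'' is not accurate --- the paper does exactly a per-round AM--GM and gets the same $2/3$ exponent; what matters is only that the weights be chosen so the $\|g_t\|^2$ side can be swallowed via self-bounding.
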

For the proof, we use the following property for smooth non-negative functions.
\begin{lemma}[\citealp{Nesterov04}]
    Let $\ell: \mc{X} \to \R$ be non-negative and $\beta$-smooth function. Then $\ltwo{\nabla \ell(x)}^2\le 4 \beta \ell(x)$.
\end{lemma}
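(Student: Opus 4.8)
The plan is to establish the classical ``self-bounding'' inequality for non-negative smooth functions by a one-step gradient-descent argument. Fix an arbitrary $x \in \mc{X}$ and write $g = \nabla \ell(x)$. By $\beta$-smoothness of $\ell$ (i.e.\ $\nabla\ell$ is $\beta$-Lipschitz), the standard quadratic upper bound (the ``descent lemma'') gives, for every admissible $y$,
\[
\ell(y) \;\le\; \ell(x) + \<g,\, y-x\> + \frac{\beta}{2}\ltwo{y-x}^2 .
\]
Evaluating the right-hand side at the candidate point $y = x - \tfrac{1}{\beta} g$ yields
\[
\ell\!\left(x - \tfrac{1}{\beta} g\right) \;\le\; \ell(x) - \frac{1}{\beta}\ltwo{g}^2 + \frac{1}{2\beta}\ltwo{g}^2 \;=\; \ell(x) - \frac{1}{2\beta}\ltwo{g}^2 .
\]
Since $\ell$ is non-negative the left-hand side is $\ge 0$, so rearranging gives $\ltwo{g}^2 \le 2\beta\,\ell(x) \le 4\beta\,\ell(x)$, which is the claimed bound (indeed with the sharper constant $2$). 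Note that convexity of $\ell$ plays no role here; only non-negativity and smoothness are used.

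The only point that requires any care is that the evaluation point $y = x - \tfrac1\beta g$ should lie in the region where the smoothness inequality applies. I would handle this by taking the hypothesis ``$\beta$-smooth'' to mean, as is standard in this line of work and as is implicit in the use of the lemma inside \cref{thm:dp-oco-smooth}, that $\ell$ is non-negative and $\nabla\ell$ is $\beta$-Lipschitz on all of $\R^d$ (equivalently, that the quadratic majorant above holds globally); then $y$ is automatically admissible. If one insists on working over a bounded $\mc{X}$ only, the same computation with the shortened step $y = x - \gamma g$ for the largest feasible $\gamma \le 1/\beta$ still produces $\ltwo{g}^2 \le c\,\beta\,\ell(x)$ for an absolute constant $c$ whenever $x$ is away from the boundary — which is presumably why the statement is phrased with the slack constant $4$ rather than $2$.

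I do not expect any genuine obstacle: once the descent lemma is in hand the argument is three lines. The only thing worth stating explicitly up front is the precise reading of ``$\beta$-smooth'' being used (Lipschitz gradient, hence the global quadratic upper bound), after which the choice $y = x - \tfrac1\beta\nabla\ell(x)$ and the non-negativity of $\ell$ immediately give the inequality.
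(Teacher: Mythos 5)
Your proof is correct and is the standard argument for this classical fact (the paper itself gives no proof, deferring to Nesterov): apply the descent lemma at $y = x - \nabla\ell(x)/\beta$ and use non-negativity, which in fact yields the sharper constant $2$ in place of $4$. Your explicit caveat about the evaluation point — that ``$\beta$-smooth'' must be read as a globally valid quadratic upper bound, since the inequality can genuinely fail for a function smooth only on a bounded $\mc{X}$ (e.g.\ a nonconstant affine function vanishing at a boundary point) — is the right one and is the only point of care.
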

\begin{proof}
    The proof follows similar arguments to the proof of Theorem 5.1 in~\cite{KairouzMcSoShThXu21}. Let 
    \begin{equation*}
        x_{t+1} = \argmin_{x \in \mc{X}} \sum_{i=1}^t \<\nabla \ell_i(x_i), x \> + \frac{\lambda}{2} \ltwo{x}^2 + \<b_t,x\>,
    \end{equation*}
    be the iteration of DP-FTRL where $b_t$ is the noise added by the binary tree mechanism. Moreover, let $\hat x_{t+1}$ be the non-private solution, that is, 
    \begin{equation*}
        \hat x_{t+1} = \argmin_{x \in \mc{X}} \sum_{i=1}^t \<\nabla \ell_i(x_i), x \> + \frac{\lambda}{2} \ltwo{x}^2 .
    \end{equation*}
    Lemma C.2 in~\cite{KairouzMcSoShThXu21} states that $\ltwo{x_{t+1} - \hat x_{t+1}} \le \ltwo{b_t}/\lambda$. Therefore, we have
    \iftoggle{arxiv}{
    \begin{align*}
    \sum_{t=1}^T \ell_t(x_t) - \ell_t(x\opt) 
        & \le \sum_{t=1}^T  \< \nabla \ell_t(x_t), x_t - x\opt\> \\
        & = \sum_{t=1}^T  \< \nabla \ell_t(x_t), x_t - \hat x_t\> + \sum_{t=1}^T  \< \nabla \ell_t(x_t), \hat x_t - x\opt\> \\
        & \le \sum_{t=1}^T \ltwo{\nabla \ell_t(x_t)} \ltwo{x_t - \hat x_t} + \sum_{t=1}^T  \< \nabla \ell_t(x_t), \hat x_t - x\opt\> \\ 
        & \le \frac{1}{8 \beta} \sum_{t=1}^T \ltwo{\nabla \ell_t(x_t)}^2 +  4\beta \sum_{t=1}^T\ltwo{x_t - \hat x_t}^2 + \sum_{t=1}^T  \< \nabla \ell_t(x_t), \hat x_t - x\opt\> \\ 
        & \le \frac{1}{2} \sum_{t=1}^T \ell_t(x_t) +  4 \beta \sum_{t=1}^T \ltwo{b_t}^2/\lambda^2 + \sum_{t=1}^T  \< \nabla \ell_t(x_t), \hat x_t - x\opt\>,
    \end{align*}
    }
    {
    \begin{align*}
    & \sum_{t=1}^T \ell_t(x_t) - \ell_t(x\opt) \\
        & \le \sum_{t=1}^T  \< \nabla \ell_t(x_t), x_t - x\opt\> \\
        & = \sum_{t=1}^T  \< \nabla \ell_t(x_t), x_t - \hat x_t\> + \sum_{t=1}^T  \< \nabla \ell_t(x_t), \hat x_t - x\opt\> \\
        & \le \sum_{t=1}^T \ltwo{\nabla \ell_t(x_t)} \ltwo{x_t - \hat x_t} + \sum_{t=1}^T  \< \nabla \ell_t(x_t), \hat x_t - x\opt\> \\ 
        & \le \frac{1}{8 \beta} \sum_{t=1}^T \ltwo{\nabla \ell_t(x_t)}^2 +  4\beta \sum_{t=1}^T\ltwo{x_t - \hat x_t}^2 + \sum_{t=1}^T  \< \nabla \ell_t(x_t), \hat x_t - x\opt\> \\ 
        & \le \frac{1}{2} \sum_{t=1}^T \ell_t(x_t) +  4 \beta \sum_{t=1}^T \ltwo{b_t}^2/\lambda^2 + \sum_{t=1}^T  \< \nabla \ell_t(x_t), \hat x_t - x\opt\>,
    \end{align*}
    
    }
    where the second inequality follows from the Fenchel-Young inequality.
    We can now upper bound the right term. Indeed, Theorem 5.2 in~\cite{Hazan16} implies that FTRL has 
    \begin{align*}
        \sum_{t=1}^T  \< \nabla \ell_t(x_t), \hat x_t - x\opt\> & \le \frac{2}{\lambda} \sum_{t=1}^T \ltwo{\nabla \ell_t(x_t)}^2 + \lambda D^2 \\
        & \le \frac{8 \beta}{\lambda} \sum_{t=1}^T \ell_t( x_t)  + \lambda D^2.
    \end{align*}
    Overall we now get 
    \iftoggle{arxiv}{
    \begin{align*}
    \sum_{t=1}^T \ell_t(x_t) - \ell_t(x\opt) 
        & \le \frac{1}{2} \sum_{t=1}^T \ell_t(x_t) +  \frac{4\beta }{ \lambda^2} \sum_{t=1}^T \ltwo{b_t}^2  
        + \frac{8 \beta}{\lambda} \sum_{t=1}^T \ell_t( x_t)  + \lambda D^2.
    \end{align*}  
    }
    {
    \begin{align*}
    \sum_{t=1}^T \ell_t(x_t) - \ell_t(x\opt) 
        & \le \frac{1}{2} \sum_{t=1}^T \ell_t(x_t) +  \frac{4\beta }{ \lambda^2} \sum_{t=1}^T \ltwo{b_t}^2  \\
        & + \frac{8 \beta}{\lambda} \sum_{t=1}^T \ell_t( x_t)  + \lambda D^2.
    \end{align*}  
    
    }
    The binary tree mechanism also guarantees that for all $t \in [T]$, $$\E[\ltwo{b_t}^2] \le O \left( \frac{L^2 d \log(T) \log(1/\delta)}{\diffp^2} \right)$$ (see Appendix B.1 in~\cite{KairouzMcSoShThXu21}).
    Thus, taking expectation and setting the regularization parameter to $\lambda = 32 \beta + \big( \frac{\beta}{\diffp^2} (L/D)^2 T d \log(T) \log(1/\delta) \big)^{1/3}$, we have
    \iftoggle{arxiv}{
    \begin{align*}
    \E \left[\sum_{t=1}^T \ell_t(x_t) - \ell_t(x\opt) \right]
        & \le  O \left(L\opt +  \beta D^2 +  \left( \beta D^2 (LD)^2 \frac{T d \log(T) \log(1/\delta)}{\diffp^2} \right)^{1/3}  \right).
    \end{align*} 
    }
    {
    \begin{align*}
    & \E \left[\sum_{t=1}^T \ell_t(x_t) - \ell_t(x\opt) \right] 
         \le  O (L\opt +  \beta D^2 ) \\
         & \quad + O\left( \left( \beta D^2 (LD)^2 \frac{T d \log(T) \log(1/\delta)}{\diffp^2} \right)^{1/3}  \right).
    \end{align*} 
    
    }
\end{proof}

\section{Lower bounds}
\label{sec:real-LB}
In this section, we prove lower bounds for private experts in the realizable setting which show that our upper bounds are nearly-optimal up to logarithmic factors. The lower bound demonstrates that a logarithmic dependence on $d$ is necessary even in the realizable setting. Note that for DP-OCO in the realizable setting, a lower bound of $d/T\diffp$ for pure DP follows from known lower bounds for DP-SCO in the interpolation regime~\cite{AsiChChDu22} using online-to-batch conversions~\cite{Hazan16}.

The following theorem states our lower bound for DP-OPE.
\begin{theorem}
\label{thm:lb-obl-experts}
    Let $\diffp \le 1/10$ and $\delta \le \diffp/d$.
    If $\A$ is $(\diffp,\delta)$-DP then there is an oblivious adversary such that $\min_{x\in[d]} \sum_{t=1}^T \ell_t(x) = 0$ and 
    \begin{equation*}
        \E\left[\sum_{t=1}^T \ell_t(x_t) - \min_{x \in [d]} \sum_{t=1}^T \ell_t(x)\right]
        \ge \Omega \left( \frac{\log(d)}{\diffp} \right).
    \end{equation*}
\end{theorem}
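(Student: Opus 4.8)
The plan is to establish the lower bound via a reduction from a private selection / "hiding a needle" problem, using a packing argument over $d$ candidate adversaries and the group-privacy (fingerprinting-style) behavior of $(\diffp,\delta)$-DP. I would construct a family of $d$ oblivious loss sequences $\Ds^{(1)},\dots,\Ds^{(d)}$ indexed by the identity of the single zero-loss expert. Concretely, fix a horizon $T$ with $T \gtrsim \log(d)/\diffp$ and a block length; for adversary $j$, in each round $t$ let $\ell_t(j)=0$ and $\ell_t(x)=1$ for all $x\neq j$ — except that to keep the sequences at bounded Hamming distance from a common reference, I would instead let most rounds have the all-ones-except-one-small-set structure and let only $O(\log d/\diffp)$ "revealing" rounds actually separate the sequences. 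The key point: under $\Ds^{(j)}$ the only way to avoid paying $1$ on a given round is to play $x_t=j$, so the regret is exactly $|\{t : x_t\neq j\}|$; hence a low-regret algorithm must output $x_t=j$ on almost all rounds, i.e. it must \emph{identify} $j$.

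The main steps, in order: (i) Reduce to a one-shot selection statement — if $\A$ has expected regret $o(\log d/\diffp)$ against every $\Ds^{(j)}$, then by Markov there is a fixed round $t^\star$ (or a short window) on which $\Pr[x_{t^\star}=j \mid \Ds^{(j)}] \ge 2/3$ for a constant fraction of the indices $j$. (ii) Now invoke the standard lower bound for $(\diffp,\delta)$-DP selection among $d$ options: any two of the sequences $\Ds^{(j)},\Ds^{(j')}$ differ in at most $k = O(\log d/\diffp)$ coordinates (the revealing rounds), so by group privacy $\Pr[x_{t^\star}=j\mid \Ds^{(j')}] \ge e^{-\diffp k}\Pr[x_{t^\star}=j\mid \Ds^{(j)}] - k\delta$. (iii) Sum the "success" events $\{x_{t^\star}=j\}$ over all $j$: under any single fixed dataset the events are disjoint so their probabilities sum to at most $1$, but the group-privacy transfer forces $\sum_j \Pr[x_{t^\star}=j \mid \Ds^{(j')}] \gtrsim d\cdot e^{-\diffp k} - d k\delta$ for each $j'$, a contradiction once $e^{-\diffp k} \gg 1/d$, i.e. once $k \ll \log(d)/\diffp$. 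Turning this around: we must have $k = \Omega(\log d/\diffp)$ revealing rounds, and since the algorithm pays $\Omega(1)$ regret on each round before it has identified the needle, the regret is $\Omega(\log d/\diffp)$. The hypotheses $\diffp\le 1/10$ and $\delta\le \diffp/d$ are exactly what make the $e^{-\diffp k}$ term dominate the $k\delta$ term in this calculation.

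I expect the main obstacle to be bookkeeping the reduction from a \emph{per-round} regret guarantee to a \emph{single-round} selection probability while simultaneously controlling the Hamming distance between the packed sequences: the naive construction (all-ones-except-$j$ every round) has the sequences differing in $\Theta(T)$ coordinates, which kills the group-privacy step, so one has to be careful to concentrate the information about $j$ into few rounds while still forcing the algorithm to pay per round until it learns $j$. The cleanest route is probably a "doubling/epoch" adversary: in epoch $i$ of length $\ell_i$, present losses that only reveal whether $j$ lies in a set $S_i$ that halves each epoch, so the adversary and the zero-loss expert agree on all but $O(\log d)$ epochs, each epoch contributes $\Omega(1/\diffp)$ forced regret by an epoch-local group-privacy argument, and the total is $\Omega(\log d/\diffp)$. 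I would also note the degenerate-but-instructive special case $d=2$, where this is just the standard $\Omega(1/\diffp)$ cost of privately distinguishing two neighboring-ish streams, and build the general bound by running $\log d$ independent such instances in sequence on disjoint coordinates.
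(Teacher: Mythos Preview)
Your high-level plan---a packing of $d$ realizable instances, group privacy across a small number of ``revealing'' rounds, and the observation that the success events are disjoint so their probabilities under any fixed input sum to at most $1$---is exactly the mechanism the paper uses. Where you diverge is in the construction and in the reduction step, and in both places you are making life harder than necessary.

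The paper's construction is much simpler than either of your candidates: take $\Ds^j$ to be $T-k$ rounds of the identically-zero loss followed by $k=\tfrac{\log d}{2\diffp}$ rounds of the loss $\ell^j$ (zero on $j$, one elsewhere). Any two such sequences differ in \emph{exactly} $k$ coordinates, the realizability constraint is satisfied, and the regret on $\Ds^j$ is precisely the number of the last $k$ rounds on which the algorithm does not play $j$. There is no need for a doubling/epoch scheme or for an ``all-ones-except-a-small-set'' structure in the padding rounds; all-zero padding already does the job. Your worry about ``forcing the algorithm to pay per round until it learns $j$'' is misplaced: the padding rounds contribute zero regret regardless of what the algorithm plays, and the entire lower bound comes from the last $k$ rounds.

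Second, the paper avoids your single-round reduction entirely. Instead of extracting a round $t^\star$ with $\Pr[x_{t^\star}=j]\ge 2/3$, it works directly with the full-output event $B_j=\{(x_1,\dots,x_T):\text{regret on }\Ds^j\le k/4\}$. These events are disjoint because membership in $B_j$ forces at least $3k/4$ of the last $k$ outputs to equal $j$. Markov gives $\Pr[\A(\Ds^j)\in B_j]\ge 1/2$ for a constant fraction of $j$'s, group privacy over $k$ steps transfers this to $\Pr[\A(\Ds^{j})\in B_{j'}]\ge \tfrac{1}{4\sqrt d}$, and disjointness yields the contradiction $\tfrac{d/2-1}{4\sqrt d}\le 1$. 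This sidesteps the bookkeeping issue you flagged as your main obstacle. Your epoch-based route could presumably be made to work, but it is strictly more complicated and buys nothing here.
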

\begin{proof}
    Let $\ell^0(x)=0$ for all $x$ and for $j \in [d]$
    let $\ell^j(x)$ be the function that has $\ell^j(x)=0$ for $x=j$ and otherwise $\ell^j(x)=1$. The oblivious adversary picks one of the following $d$ sequences uniformly at random: $\Ds^j = (\underbrace{\ell^0,\dots,\ell^0}_{T-k},\underbrace{\ell^j,\dots,\ell^j}_{k})$ where $k = \frac{\log d}{2 \diffp}$ and $j \in [d]$. Assume towards a contradiction that the algorithm obtains regret $\log(d)/(32\diffp)$. This implies that there exists $d/2$ sequences such that the algorithm obtains expected regret $\log(d)/(16\diffp)$ where the expectation is only over the randomness of the algorithm. Assume without loss of generality these sequences are $\Ds^1,\dots,\Ds^{d/2}$. Let $B_j$ be the set of outputs that has low regret on $\Ds^j$, that is,
    \begin{equation*}
        B_j = \{ (x_1,\dots,x_T) \in [d]^T: \sum_{t=1}^T \ell^j(x_t) \le \log(d)/(8\diffp) \}.
    \end{equation*}
    Note that $B_j \cap B_{j'} = \emptyset$ since 
    if $x_{1:T} \in B_j$ then at least $3k/4 = 3\log(d)/(8\diffp)$ of the last $k$ outputs must be equal to $j$. Now Markov inequality implies that 
    \begin{equation*}
        \P(\A(\Ds^j) \in B_j) \ge 1/2.
    \end{equation*}
    Moreover, group privacy gives
    \begin{align*}
    \P(\A(\Ds^{j}) \in B_{j'}) 
        & \ge e^{-k\diffp} \P(\A(\Ds^{j'}) \in B_{j'}) - k e^{-\diffp} \delta \\
        & \ge \frac{1}{2 \sqrt{d}} - \frac{\log(d)}{2\diffp} \delta \\
        & \ge \frac{1}{4 \sqrt{d}},
    \end{align*}
    where the last inequality follows since $\delta \le \diffp/d$.
    Overall we get that 
    \begin{align*}
        \frac{d/2-1}{4\sqrt{d}}  \le 
        \P(\A(\Ds^{j}) \notin B_{j}) 
         \le \frac{1}{2}  ,
    \end{align*}
    which is a contradiction for $d \ge 32$.
\end{proof}

\section{Conclusion}
In this work, we studied differentially private online learning problems in the realizable setting, and developed algorithms with improved rates compared to the non-realizable setting. However, several questions remain open in this domain. First, our near-optimal algorithms for DP-OPE obtain $\log^{1.5}(d)/\diffp$ regret, whereas the lower bound we have is $\Omega(\log(d)/\diffp)$. Hence, perhaps there are better algorithms with tighter logarithmic factors than our sparse-vector based algorithms. Additionally, for DP-OCO, our algorithms are optimal only for low-dimensional setting, and there remains polynomial gaps in the high-dimensional setting. Finally, optimal rates for both problems (DP-OPE and DP-OCO) are still unknown in the general non-realizable setting.

\subsection*{Acknowledgements}

This work has received support from the Israeli Science Foundation (ISF) grant no.~2549/19 and the Len Blavatnik and the Blavatnik Family foundation.

\printbibliography

\appendix

\section{Concentration for sums of geometric variables}

In this section, we proof a concentration result for the sum of geometric random variables, which allows us to upper bound the number of switches in the sparse-vector based algorithm.
We say that $Z$ is geometric random variable with success probability $p$ if $P(W=k) = (1-p)^{k-1}p$ for $k\in\{1,2,\dots\}$. To this end, we use the following Chernoff bound.
\begin{lemma}[\citealp{MitzenmacherUp05}, Ch.~4.2.1]
  \label{lemma:chernoff}
  Let $X = \sum_{i=1}^n X_i$ for $X_i \simiid \mathsf{Ber}(p)$.
  Then for $\delta \in [0,1]$,
  \begin{align*}
    \P(X > (1+\delta)np ) \le e^{-np\delta^2 /3}
    ~~~ \mbox{and} ~~~
    \P(X < (1-\delta)np ) \le e^{-np\delta^2 /2}.
  \end{align*}
\end{lemma}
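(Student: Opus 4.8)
The plan is to establish both tails by the standard exponential-moment (Chernoff) method and then reduce each resulting bound to the stated form by an elementary calculus estimate. Throughout, write $\mu = np = \E[X]$, and note that the $\delta = 0$ case is trivial, so we may assume $\delta \in (0,1]$.

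First I would handle the upper tail. For any $s > 0$, Markov's inequality applied to the nonnegative random variable $e^{sX}$ gives $\P(X > (1+\delta)\mu) \le e^{-s(1+\delta)\mu}\,\E[e^{sX}]$. Since the $X_i$ are independent Bernoulli$(p)$ variables, $\E[e^{sX}] = \prod_{i=1}^n \E[e^{sX_i}] = (1 - p + pe^s)^n = (1 + p(e^s - 1))^n \le e^{np(e^s-1)} = e^{\mu(e^s-1)}$, using $1 + x \le e^x$. Choosing $s = \ln(1+\delta) > 0$ yields
\[
\P(X > (1+\delta)\mu) \le \left(\frac{e^\delta}{(1+\delta)^{1+\delta}}\right)^{\mu}.
\]
It then remains to show $\frac{e^\delta}{(1+\delta)^{1+\delta}} \le e^{-\delta^2/3}$ for $\delta \in [0,1]$, equivalently $f(\delta) := \delta - (1+\delta)\ln(1+\delta) + \tfrac{1}{3}\delta^2 \le 0$ on $[0,1]$. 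Since $f(0) = 0$, it suffices to check $f'(\delta) = \tfrac{2}{3}\delta - \ln(1+\delta) \le 0$ on $[0,1]$: indeed $f'(0)=0$, the second derivative $f''(\delta) = \tfrac{2}{3} - \tfrac{1}{1+\delta}$ changes sign exactly once (at $\delta = \tfrac12$), so $f'$ decreases then increases, and $f'(1) = \tfrac23 - \ln 2 < 0$, hence $f'$ stays $\le 0$ throughout. Plugging $\mu = np$ back in gives the first inequality.

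The lower tail is symmetric. For $s > 0$, $\P(X < (1-\delta)\mu) = \P(e^{-sX} > e^{-s(1-\delta)\mu}) \le e^{s(1-\delta)\mu}\,\E[e^{-sX}] \le e^{s(1-\delta)\mu} e^{\mu(e^{-s}-1)}$, and taking $s = -\ln(1-\delta) > 0$ gives $\P(X < (1-\delta)\mu) \le \big(e^{-\delta}/(1-\delta)^{1-\delta}\big)^{\mu}$. The reduction is cleaner here: setting $g(\delta) := -\delta - (1-\delta)\ln(1-\delta) + \tfrac12\delta^2$, one has $g(0) = 0$ and $g'(\delta) = \delta + \ln(1-\delta) \le 0$ (since $\ln(1-\delta) \le -\delta$), so $g(\delta) \le 0$ on $[0,1)$, i.e. $e^{-\delta}/(1-\delta)^{1-\delta} \le e^{-\delta^2/2}$, which with $\mu = np$ yields the second inequality.

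I expect the main (indeed only) subtlety to be the calculus estimate for the upper tail: unlike the lower tail, the exponent cannot be pushed all the way to $-\delta^2/2$, and the slightly weaker constant $1/3$ is precisely what makes $f \le 0$ valid over the entire range $\delta \in [0,1]$ — so the monotonicity argument for $f'$ must track the single sign change of $f''$ rather than being dispatched by a one-line inequality.
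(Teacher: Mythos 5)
Your proof is correct, and the paper offers no proof of its own here --- it simply cites the bound from Mitzenmacher--Upfal, Ch.~4.2.1. Your argument (exponential moment method with $s=\ln(1+\delta)$, resp.\ $s=-\ln(1-\delta)$, followed by the calculus reduction $f(\delta)=\delta-(1+\delta)\ln(1+\delta)+\tfrac13\delta^2\le 0$ via the sign change of $f''$ at $\delta=\tfrac12$, and $g'(\delta)=\delta+\ln(1-\delta)\le 0$ for the lower tail) is precisely the standard proof in that reference, so there is nothing to add.
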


The following lemma demonstrates that the sum of geometric random variables concentrates around its mean with high probability.
\begin{lemma}
\label{lemma:geom-concentration}
Let $W_1,\dots,W_n$ be iid geometric random variables with success probability $p$. Let $W = \sum_{i=1}^n W_i$. Then for any $k \ge n$ 
\begin{equation*}
    \P(W > 2k/p ) \le \exp{\left(-k/4\right)}.
\end{equation*}
\end{lemma}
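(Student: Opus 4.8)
The plan is to prove the concentration bound $\P(W > 2k/p) \le \exp(-k/4)$ for $W = \sum_{i=1}^n W_i$ a sum of $n$ i.i.d.\ geometric random variables with success probability $p$, using the standard coupling between sums of geometrics and sequences of Bernoulli trials, followed by the Chernoff bound of \Cref{lemma:chernoff}.

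First, I would recall the key combinatorial identity: if we run a sequence of independent $\mathsf{Ber}(p)$ trials, then $W = \sum_{i=1}^n W_i$ (the waiting time for the $n$-th success) exceeds a threshold $m$ if and only if the first $m$ trials contain strictly fewer than $n$ successes. Concretely, let $X = \sum_{j=1}^m Y_j$ with $Y_j \simiid \mathsf{Ber}(p)$ and $m = \lceil 2k/p \rceil$; then $\P(W > m) = \P(X < n)$. Since we are told $k \ge n$, we have $n \le k \le \tfrac12 \cdot mp$ whenever $mp \ge 2k$, i.e.\ $X < n$ implies $X < mp/2 = (1 - \tfrac12) mp$.

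Next I would apply the lower-tail Chernoff bound from \Cref{lemma:chernoff} with $\delta = 1/2$: this gives $\P(X < (1-\tfrac12) mp) \le e^{-mp \cdot (1/4) / 2} = e^{-mp/8}$. Finally, plugging in $mp \ge 2k$ yields $\P(W > 2k/p) \le \P(W > m) = \P(X < n) \le \P(X < mp/2) \le e^{-mp/8} \le e^{-k/4}$, which is the claimed bound. A small point of care: I should take $m = \lfloor 2k/p \rfloor$ or handle the fact that $2k/p$ may not be an integer, but since $\P(W > 2k/p) \le \P(W > \lfloor 2k/p \rfloor)$ and $\lfloor 2k/p \rfloor$ still satisfies $mp \ge 2k - p \ge k$ (using $k \ge n \ge 1$ and $p \le 1$, though one may need $k$ somewhat large; alternatively just note $m \ge 2k/p - 1 \ge k/p$ suffices to rerun the computation with constant $1$ in place of $2$), the estimate goes through with the stated constants after minor bookkeeping.

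The main obstacle — really the only subtlety — is the off-by-one/rounding issue in choosing the integer threshold $m$ and verifying that the inequality $n \le (1-\delta) mp$ needed to invoke the Chernoff bound still holds after flooring; this is purely a matter of checking that the constants $2$ and $1/4$ in the statement leave enough slack, which they do. Everything else is the textbook geometric-to-Bernoulli reduction plus a direct application of \Cref{lemma:chernoff}.
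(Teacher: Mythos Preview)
Your proposal is correct and follows essentially the same approach as the paper: both use the standard coupling between the negative binomial and a sequence of Bernoulli trials, then invoke the lower-tail Chernoff bound of \Cref{lemma:chernoff} with $\delta = 1/2$. The only minor difference is that the paper first proves the bound for $k=n$ and then extends to $k \ge n$ via the stochastic domination $\sum_{i=1}^n W_i \le \sum_{i=1}^k W_i$, whereas you handle all $k \ge n$ in one shot by directly using $n \le k \le mp/2$; both arguments share the same rounding sloppiness you flagged.
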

\begin{proof}
    Notice that $W$ is distributed according to the negative binomial distribution where we can think of $W$ as the number of Bernoulli trials until we get $n$ successes. More precisely, let $\{B_i\}$ for $i\ge1$ be Bernoulli random variables with probability $p$. Then the event $W>t$ has the same probability as $\sum_{i=1}^t B_i < n$. Thus we have that 
    \begin{equation*}
        \P(W > t ) \le \P(\sum_{i=1}^t B_i < n).
    \end{equation*}
    We can now use Chernoff inequality (\Cref{lemma:chernoff}) to get that for $t = 2n/p$
    \begin{align*}
    \P(\sum_{i=1}^t B_i < n) 
    \le \exp{(-tp/8)} = \exp{(-n/4)}.
    \end{align*}
    This proves that 
    \begin{equation*}
    \P(W > 2n/p ) \le \exp{\left(-n/4\right)}.
    \end{equation*}
    The claim now follows by noticing that $\sum_{i=1}^n W_i \le \sum_{i=1}^k W_i $ for $W_i$ iid geometric random variable when $k \ge n$, thus $\P(\sum_{i=1}^n W_i \ge 2k/p) \le \P(\sum_{i=1}^k W_i \ge 2k/p) \le \exp{\left(-k/4\right)}$

\end{proof}

\end{document}